\newcommand*{\rom}[1]{\expandafter\@slowromancap\romannumeral #1@}
      \theoremstyle{plain}
\newcommand{\dee}{\mathrm{d}}
\newcommand{\Var}{\text{Var}}
\newcommand{\KL}{\mathrm{KL}}
\newtheorem{proposition}{Proposition}[section]
\newtheorem{lemma}{Lemma}[section]
\newtheorem{remark}{Remark}[section]
\newcommand{\piold}{\pi_{\text{old}}}
\newcolumntype{Y}{>{\centering\arraybackslash}X}
\definecolor{TokBlue}{HTML}{2F6DF6}
\definecolor{TokBlueSoft}{HTML}{E8F0FF}
\definecolor{SeqPurple}{HTML}{7A3E9D}
\definecolor{SeqPurpleSoft}{HTML}{F3E9FA}
\definecolor{RowGray}{HTML}{F8F9FB}
\definecolor{BestGreen}{RGB}{0,128,0}
\definecolor{SecondBlue}{RGB}{0,0,180}
\newcommand{\best}[1]{\textcolor{BestGreen}{\textbf{#1}}}
\def\eqref#1{Eq.(\ref{#1})}
\def\1{\bm{1}}
\DeclareMathAlphabet{\mathsfit}{\encodingdefault}{\sfdefault}{m}{sl}
\SetMathAlphabet{\mathsfit}{bold}{\encodingdefault}{\sfdefault}{bx}{n}
\newcommand{\E}{\mathbb{E}}
\definecolor{darkblue}{rgb}{0.0, 0.0, 0.55}
\title{{\fontsize{16}{17}\selectfont Improving Reasoning for Diffusion Language Models via Group Diffusion Policy Optimization}}
\author{Kevin Rojas$^{1}$\thanks{\small{Kevin did this work while interning at Morgan Stanley. $\dagger$ Correspondence.}}, ~~Jiahe Lin$^{2}$, ~~Kashif Rasul$^{2}$, ~~Anderson Schneider$^{2}$ \\
  \textbf{Yuriy Nevmyvaka$^{2}$, ~~Molei Tao$^{1 \dagger}$, ~~Wei Deng$^{2 \dagger}$} \\
  {$^1$}Georgia Institute of Technology ~~{$^2$}ML Research, Morgan Stanley \\
  \tt{\small \{kevin.rojas,mtao\}@gatech.edu, weideng056@gmail.com} \\
}
\begin{document}

\maketitle


\begin{abstract}
Diffusion language models (DLMs) enable parallel, order-agnostic generation with iterative refinement, offering a flexible alternative to autoregressive large language models (LLMs). However, adapting reinforcement learning (RL) fine-tuning to DLMs remains an open challenge because of the intractable likelihood. Pioneering work such as diffu-GRPO \citep{zhao2025d1} estimated token-level likelihoods via one-step unmasking. While computationally efficient, this approach is severely biased. A more principled foundation lies in sequence-level likelihoods, where the evidence lower bound (ELBO) serves as a surrogate. Yet, despite this clean mathematical connection, ELBO-based methods have seen limited adoption due to the prohibitive cost of likelihood evaluation. In this work, we revisit ELBO estimation and disentangle its sources of variance. This decomposition motivates reducing variance through fast, deterministic integral approximations along a few pivotal dimensions. Building on this insight, we introduce \textbf{Group Diffusion Policy Optimization (GDPO)}, a new RL algorithm tailored for DLMs. GDPO leverages simple yet effective \emph{Semi-deterministic Monte Carlo} schemes to mitigate the variance explosion of ELBO estimators under vanilla double Monte Carlo sampling, yielding a provably lower-variance estimator under tight evaluation budgets. Empirically, GDPO achieves consistent gains over pretrained checkpoints and outperforms diffu-GRPO, one of the state-of-the-art baselines, on the majority of math, reasoning, and coding benchmarks.
\end{abstract}




\begin{figure}[b]
    \centering
    \includegraphics[width=\linewidth]{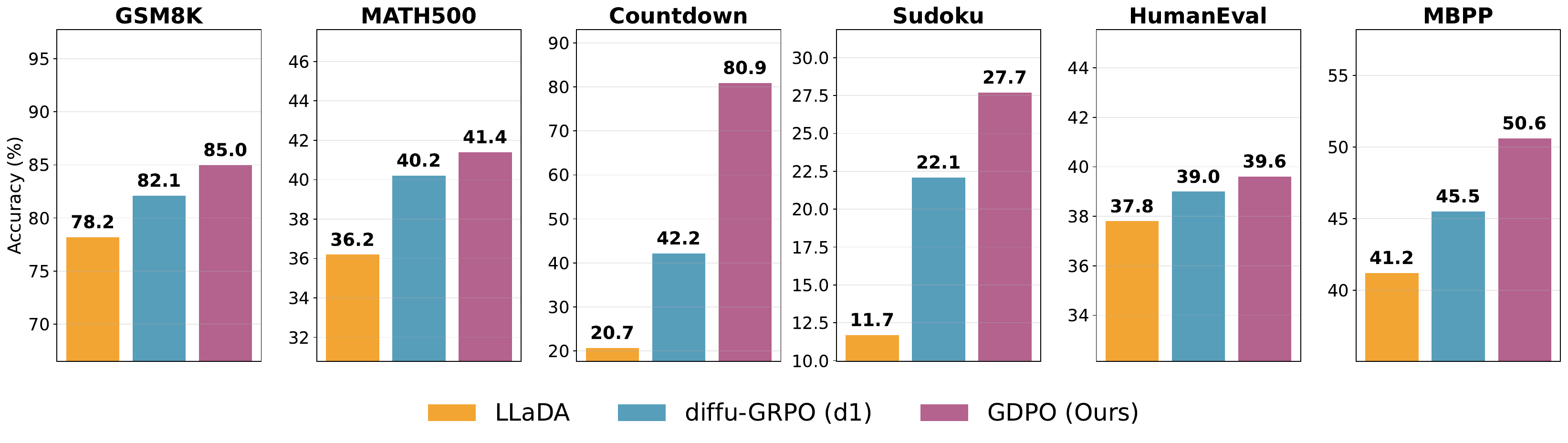}
    \caption{Across reasoning, planning, and coding tasks, our GDPO algorithm for diffusion language models — \textcolor{black}{\emph{using the best of 128/256/512 generations}} — significantly outperforms the LLaDA baseline and prior RL methods such as diffu-GRPO.}
    \label{fig:promo}
\end{figure}
\section{Introduction}
Large language models (LLMs) \citep{radford2018gpt1, radford2019gpt2, brown2020gpt3, openai2023gpt4, dubey2024llama3, Qwen2025QwQ32B} have have revolutionized modern science by providing exceptionally general-purpose representations and abstractions. Their training typically proceeds in two stages: a pretraining stage, where vast corpora are used to optimize the next-token prediction objective and endow the model with broad world knowledge and linguistic representations; and a post-training stage, where alignment techniques adapt the raw model outputs for improved reliability and usability \citep{bai2022training}. Among alignment methods, reinforcement learning (RL) \citep{openai2024reasoning} has emerged as especially promising for post-tuning LLMs on complex tasks with well-defined reward signals. DeepSeekMath \citep{shao2024deepseekmath} advances this direction by scaling and stabilizing RL through Group Relative Policy Optimization (GRPO), showing that RL can reduce the computational burden of alignment \citep{Guo2025DeepSeekR1, kimi2025k1.5} while substantially improving LLMs on reasoning-intensive tasks such as mathematics \citep{cobbe2021training, hendrycks2021math}, planning \citep{ye2025beyond, liu2025think_while_you_generate}, coding \citep{austin2021program}, and \textcolor{black}{multimodality} \citep{Siddarth_Amortizing}. Nevertheless, key limitations remain: inference is slow, generation is constrained to a left-to-right order, and early mistakes cannot be revised, often leading to error propagation.

By contrast, discrete diffusion models \citep{campbell2022continuous, lou2023discrete, shi2024simplified, gat2024discrete_flow_matching, sahoo2024simple, nie2025large, ye2025dream} — often referred to as Diffusion Language Models (DLMs) — offer greater flexibility and versatility. DLMs support faster inference \citep{arriola2025block, khanna2025mercury, song2025seed}, iterative refinement through token revisions and remasking \citep{wang2025remasking}, knowledge transfer via adaptation from autoregressive models \citep{gong2025scaling}, and demonstrate superior performance to autoregressive LLMs in low-data regimes \citep{prabhudesai2025diffusion}. These advantages have motivated growing efforts to adapt RL methods originally developed for LLMs to DLMs \citep{zhao2025d1, zhu2025llada, gong2025diffucoder, zekri2025fine_tuning, shankar2025padre, huang2025reinforcing}. However, the absence of a straightforward autoregressive structure complicates likelihood estimation at both the token and sequence levels — quantities that many RL objectives fundamentally rely on.

Previous works have extended GRPO \citep{zhao2025d1, gong2025diffucoder, gong2025diffucoder, tang2025wd1} and considered different heuristics to approximate the token-level likelihood; such methods were computationally scalable, but their connection to the mathematical foundations of DLMs remain unclear. 
A different approach considered extending DPO \citep{zhu2025llada}, however, their method required many network evaluations, which limits its applicability. 


In this paper, we introduce Group Diffusion Policy Optimization (GDPO), a novel RL algorithm built to enhance reasoning for diffusion language models (DLMs). We analyze the variance decomposition of the sequence-level ELBO, which clarifies why prior double Monte Carlo estimators lead to a large variance issue and tend to be computationally expensive. Motivated by these findings, we propose fast yet effective integral approximation strategies based on a \emph{Semi-deterministic Monte Carlo} scheme. This approach enables GDPO to solve complex reasoning tasks for DLMs while provably reducing variance under tight evaluation budgets. Empirically, GDPO consistently improves upon pretrained checkpoints and outperforms state-of-the-art baselines, including diffu-GRPO \citep{zhao2025d1}, across diverse math, reasoning, and coding benchmarks. 



\vspace{0.5em}

\section{Preliminaries}

\subsection{Masked Diffusion Language Models (MDMs)}

Diffusion models \citep{ho2020denoising_diffusion_probabilistic_models, song2021score_sde} have demonstrated remarkable success in continuous domains such as image synthesis, but their extension to discrete spaces like text remains less explored. To address this gap, discrete diffusion models \citep{austin2021structured, lou2023discrete, shi2024simplified, sahoo2024simple}—often referred to as Diffusion Language Models (DLMs)—introduce masking noise to progressively corrupt sequences and are trained to model the marginal distribution of the induced reverse dynamics. In what follows, we formally define the forward and reverse processes along with the training objectives that characterize DLMs.

\paragraph{Forward process.} 
Given a clean sequence data $y_0 \sim \pi_{\text{data}}$ and timestamp $t \in [0,1]$, the forward process $y_t \sim \pi_{t|0}(\cdot|y_0)$ factorizes as
\[
\pi_{t|0}(y_t|y_0) = \prod_{i=1}^L \pi_{t|0}(y_t^i|y_0^i), 
\quad \pi_{t|0}(y_t^i|y_0^i) = \text{Cat}((1-t) e_{y_0^i} + t e_M),
\]
where $e_{y_0^i}$ is the one-hot vector that encodes the position of token $y_0^i$ and ${e_M}$ is the one-hot vector for the mask token ${M}$; and $\text{Cat}(\cdot)$ denotes the categorical distribution. Thus, each coordinate $y_t^i$ is independently replaced by the mask token with probability $t$ and otherwise remains unchanged.

\paragraph{Reverse process.} The reverse process aims to reconstruct the original sequence $y_0$ from a corrupted sequence $y_t$. Given a probabilistic prediction of the a token $y_\theta^i$, we can write down the transition for any $s < t$ as:
\[
q_{s|t}(y_s|y_t) = \prod_{i=1}^L q_{s|t}(y_s^i|y_t, y_\theta^i), 
\quad q_{s|t}(y_s^i|y_t,y_\theta^i) =
\begin{cases}
\text{Cat}(e_{y_t^i}), & y_t^i \neq M,\\[6pt]
\text{Cat}\!\left(\tfrac{s}{t} {e_M} + \tfrac{t-s}{t}y_\theta^i\right), & y_t^i=M.
\end{cases}
\]

\paragraph{Denoising objective.}
With the linear noise schedule and time-independent conditional probabilities \citep{ou2024your}, the reverse transition is often approximated by $\pi_\theta(y_0^i|y_t)$, trained via a simple training objective \citep{zhu2025llada}:
\begin{align}\label{eqn:MDLM-obj}
    -\E_{y_0\sim p_{\text{data}}}\E_{t\sim \mathcal{U}[0,1]} \E_{y_t \sim \pi_t(\cdot|y_0)} \left[ \frac{1}{t} \sum_{i = 1}^L \mathbf{1}[y_t^i = {M}] \log \pi_\theta(y_0^i|y_t)\right].
\end{align}
Notably, the loss function of diffusion language models provides a lower bound for the likelihood known as evidence lower bound (ELBO):
\begin{align}\label{eqn:elbo}
    \mathcal{L}_{\text{ELBO}}(y|q) = \E_{t\sim \mathcal{U}[0,1]} \E_{y_t \sim \pi(\cdot|y)} \left[ \frac{1}{t} \sum_{i = 1}^L \mathbf{1}[y_t^i = {M}] \log \pi_\theta(y^i|y_t, q)\right] \leq \log \pi(y|q)
\end{align}
where $q$ usually denotes a prompt and $y$ is its answer. DLMs are conceptually similar to BERT \citep{devlin2019bert} in that both rely on token masking to train language representations. However, they differ in that BERT masks a fixed proportion of tokens and predicts them in a single step, while DLMs adopt a time-varying masking schedule and iteratively denoise from full corruption, thus yielding a true generative model.

\subsection{Reinforcement Learning without Value Networks}

Policy gradients \citep{williams1992simple} have become the workhorse for post-training large language models (LLMs). Among them, proximal policy optimization (PPO) \citep{schulman2017proximal} remains the most widely used. However, PPO’s reliance on a value network for advantage estimation inevitably increases both computational cost and training instability.

\textbf{Group Relative Policy Optimization (GRPO)}  \citep{shao2024deepseekmath} addresses this limitation by eliminating the value network. Instead, it leverages a Monte Carlo (MC) estimator constructed from multiple sampled answers. For a given prompt $q$ and candidate answers $\{y_g\}_{i=1}^G$, the GRPO objective is: 
\begin{align*}
\mathcal{L}^{\text{GRPO}}(\theta) = \mathbb{E}_{x}\mathbb{E}_{y_g \sim \piold} \left[ \frac{1}{G} \sum_{g=1}^G \frac{1}{|y_g|}\sum_{i=1}^{|y_g|}\min\left( r_{g}^i A_g, \text{clip}(r_{g}^{i}, 1 - \epsilon, 1 + \epsilon) A_g \right) - \beta\KL(\pi_\theta || \pi_{\text{ref}})\right] ,
\end{align*}
where the importance ratio and normalized advantage are defined as: 
\begin{align}\label{eq:advantage-estimate}
r_{g}^{i}(y) = \frac{\pi_\theta(y_{g}^i |q, y_g^{<i})}{\pi_{\text{old}}(y_{g}^i|q, y_g^{<i})}, \quad
    A_g = \frac{R_g - \text{mean}(R_1, \dots, R_G)}{\text{std}(R_1, \dots, R_G)},
\end{align}
with $R_g = R(q,y_g)$ denoting the sequence-level reward.
Notably, although likelihoods are defined at the token level, rewards are assigned only at the sequence level.


Despite their wide adoption, LLMs face key limits: slow inference, rigid left-to-right generation, and error propagation. DLMs mitigate these with parallel, iterative refinement, but their flexibility complicates likelihood estimation and challenges RL-based post-training.

\textbf{Diffu-GRPO:} A pioneering effort in fine-tuning diffusion language models was made by \cite{zhao2025d1}, who proposed \emph{Diffu-GRPO}, an adaptation of GRPO for masked diffusion. They approximated the sequence-level likelihood via a fast but coarse mean-field network evaluation, and introduced a practical scheme for this approximation. Specifically, their method starts by perturbing the input prompt $q$ with random noise to obtain $q'$. They then consider fully masked sequence $q \oplus M \oplus \dots \oplus M$, where $\oplus$ denotes concatenation and $M$ is the masked token, they perform a single-step unmasking to estimate
$$\log p_\theta(y_g^i|q' \oplus M \oplus \dots \oplus M).$$
This approach has two key virtues: it yields likelihood estimates for every token, and it does so with only one forward pass of the network, ensuring computational efficiency. Empirically, Diffu-GRPO achieves consistent performance gains across a wide range of tasks. However, the one-step unmasking in the mean-field manner introduces significant bias: since tokens are generated sequentially, important token correlations are discarded.



\section{Improving Reasoning via Group Diffusion Policy Optimization}

GRPO improves computational efficiency and training stability by estimating advantages from group statistics rather than training a value network. In autoregressive LLMs, its effectiveness relies on two factors: (1) accurate sequence likelihoods, naturally supported by the left-to-right factorization, and (2) token-level importance ratios. In Diffusion Language Models (DLMs), however, the order-agnostic generation paradigm \citep{ou2024your, kim2025train_for_the_worst} renders both sequence likelihoods and token-level ratios intractable.

\subsection{Revisiting Sequence-Level Likelihood for DLMs}

To address these challenges, much of the field has focused on fast but coarse token-level approximations. Token-level methods leverage per-token probabilities to provide fine-grained control over model updates and act as a stabilizing force during training. To make token-level training feasible despite intractable likelihoods, \citet{zhao2025d1} introduced heuristic mean-field approximations. Although efficient and empirically effective, these methods fail to capture sequential dependencies and often overweight individual tokens. \citet{gong2025diffucoder} later improved the approximation by incorporating two complementary \emph{random} timesteps, yet a general solution remains elusive.

In contrast, sequence-level objectives provide more faithful training signals but are difficult to apply in DLMs, since the order-agnostic generation paradigm precludes exact likelihood evaluation \citep{ou2024your}. We review the evidence lower bound (ELBO) as a surrogate of the likelihood:
\begin{align}
    \E_{t\sim \mathcal{U}[0,1]} \E_{y_t \sim \pi_t(\cdot|y)} \left[ \frac{1}{t} \sum_{i = 1}^L \mathbf{1}[y_t^i = M] \log \pi_\theta(y^i|y_t, q)\right] \leq \log \pi(y|q),\notag
\end{align}
which provides a principled avenue for extending sequence-level RL methods to DLMs. However, the computational cost of this substitution remains unclear.

\textbf{Variance–Cost Dilemma}. Although more principled, sequence-level objectives face a fundamental trade-off: accurate likelihood estimates demand expensive network evaluations, while cheaper approximations suffer from high variance or bias. For instance, \cite{nie2025large} report needing up to 128 samples for reliable estimates, incurring prohibitive cost; \cite{zhu2025llada} reduces this to eight evaluations, yet the overhead remains substantial, and a systematic understanding of the variance is still lacking. This tension highlights a central challenge:

\begin{center}
\textit{Designing estimators that are both efficient and low-variance remains an open problem.}
\end{center}

\subsection{Disentangling Variance in ELBO}

To tackle the variance–cost dilemma, we begin by analyzing the different sources of variance in approximating the ELBO. From ~\eqref{eqn:elbo}, two distinct sources of randomness emerge:
\textbf{(1) Random Time:} sampling $t$, which determines the overall masking level; and
\textbf{(2) Random Masking:}  selecting which tokens are masked given that ratio, introducing additional variance.
Our analysis disentangles the contribution of each source to the variance of the loss function.

To investigate this we leverage $1000$ different prompts pulled from the OpenWeb dataset. Figure~\ref{fig:var-decomp} reports the mean and variance of the loss as functions of time, along with the percentage of variance attributed to each source. A detailed derivation can be found in Appendix~\ref{app:var-decomp}. Several key observations are given next.

\textbf{Random time dominates the variance:} As shown in Figure~\ref{fig:var-decomp}(a), the majority of variance arises from sampling the timestamps that control masking ratios. This is intuitive: varying the ratio drastically changes the input—ranging from nearly unmasked to fully masked—which produces large disparities in the loss and inflates variance.

\textbf{The loss curve exhibits a simple structure:} In Figure~\ref{fig:var-decomp}(b), the loss as a function of time reveals a smooth, clear structure. Approximating the ELBO boils down to computing the area under this curve, indicating that the problem is naturally suited for \emph{deterministic} integral approximation to suppress the variance. 

\textbf{Variance across timesteps is stable:} As shown in Figure~\ref{fig:var-decomp}(c), variance peaks near $t = 0$, stabilizes across intermediate masking ratios, and decays to zero as inputs become fully masked. This aligns with observations from \cite{zhu2025llada} and indicates that only a small number of samples are needed for the inner integral.


\begin{figure}
    \centering
    \includegraphics[width=\linewidth]{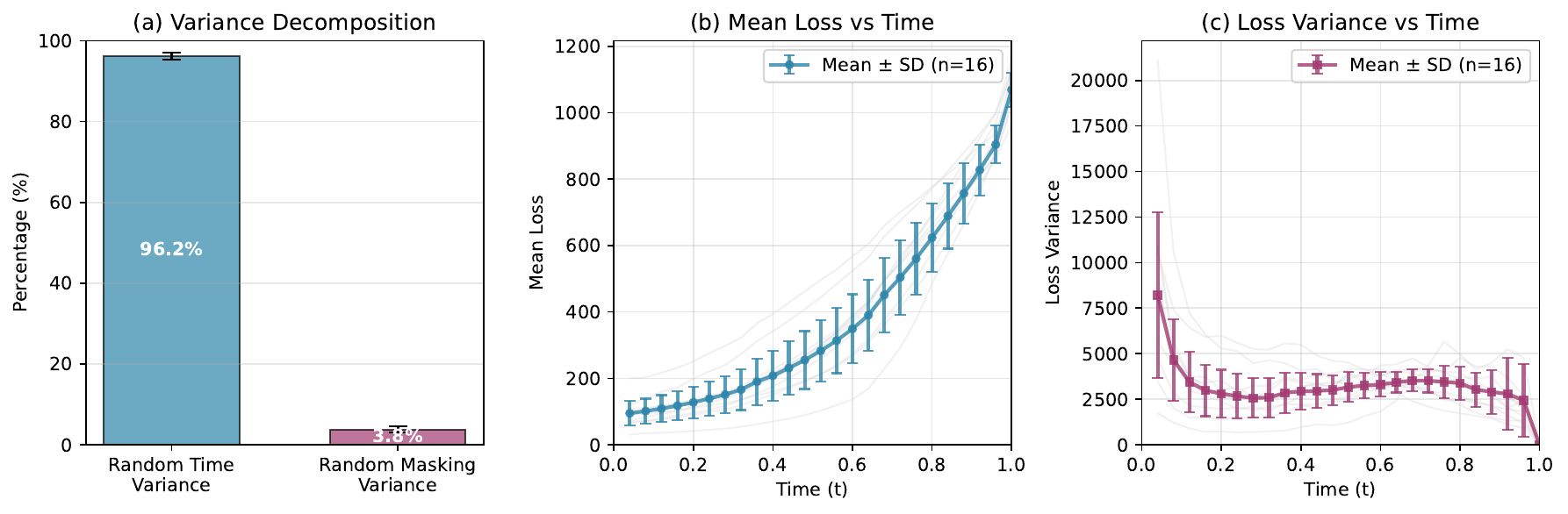}
    \caption{We plot the mean and variance of the loss functions as a function of the noise level $t$. (a) We observe that most of the variance comes from picking the random time (b) The loss function follows a simple, predictable shape across many prompts. (c) The loss variance varies highly at the end but stabilizes for most times.}
    \label{fig:var-decomp}
\end{figure}


\subsection{Variance Reduction via Semi-deterministic Monte Carlo}\label{sec:elbo-quadratures}

To achieve low-variance estimates under tight evaluation budgets, we limit naive Monte Carlo sampling and adopt \emph{deterministic} integration methods to avoid the slow MC convergence of $O(N^{-1/2})$.

\textbf{Deterministic time:} Motivated by the observation in Figure \ref{fig:var-decomp}(a), instead of considering the problem as a double Monte Carlo problem, we consider it to be a time integral to eliminate the large variance caused by random time: 
\begin{align}\label{eqn:deterministic-time-elbo}
    \mathcal{L}_{\text{ELBO}}(y|q) = \int_0^1 \E_{y_t \sim \pi_t(\cdot|y)} \left[ \frac{1}{t} \sum_{i = 1}^L \mathbf{1}[y_t^i = M] \log \pi_\theta(y^i|y_t, q)\right] \dee t\leq \log \pi(y|q).
\end{align}

\textbf{Numerical quadrature:} We further approximate this integral using a standard quadrature with $N$ points, then our estimate is of the form: 
\begin{align}\label{quad_elbo}
\mathcal{L}_{\text{ELBO}}(y|q) 
&\approx \sum_{n=1}^N w_n \underbrace{\sum_{k=1}^K \left[ \frac{1}{t_n} \sum_{i = 1}^L \mathbf{1}[(y_{t_n}^{[k]})^i = M] \log \pi_\theta(y^i|y_{t_n}^{[k]}, q)\right]}_{l(\pi_\theta; y, q, t_n)}.
\end{align}
where $y_{t_n}^{[k]} \sim \pi_{t_n}(\cdot|y)$ and $\{w_n\}_{n=1}^N$ are the associated weights and the inner expectation is approximated using Monte-Carlo estimates. 

Due to the deterministic–stochastic nature in the integration, we refer to it as a \textbf{Semi-deterministic Monte Carlo (SDMC)} scheme.

This representation serves several advantages, firstly, it is guaranteed to approximate the ELBO when $N$ and $K$ are large enough. Secondly, by fixing the time points in the first integral we significantly reduce the variance of our estimator. Thirdly, as observed in Figure \ref{fig:var-decomp} (b), where the loss function is plotted for several different prompts, the loss function has a simple shape, specifically, it is strictly increasing and convex, which makes it well-suited for integral approximations via quadratures. Furthermore as observed in Figure \ref{fig:var-decomp} (c) the variance remains relatively constant across many noise levels, which allows for stable computations and results.  
\begin{wrapfigure}{r}{0.5\textwidth}
  \begin{center}
\includegraphics[width=\linewidth]{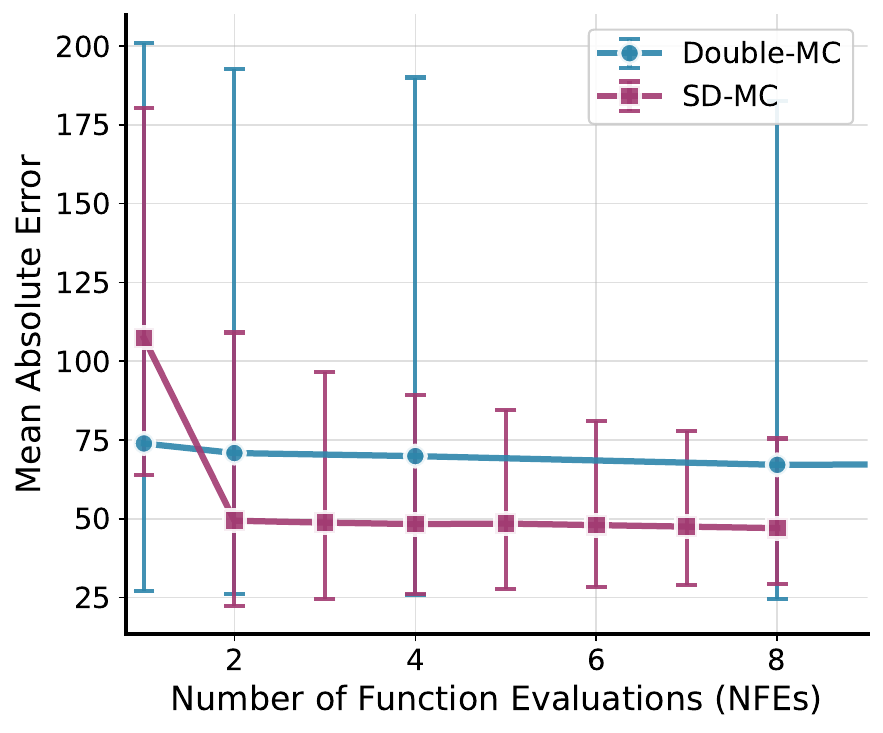}
    \caption{Estimation error and variance for Double Monte Carlo vs our Semi-deterministic Monte Carlo method. SD-MC achieves lower bias and variance, with most benefits obtained using only $2$–$3$ points.}
    \label{fig:estimator-efficiency}
  \end{center}
  \vspace{-\baselineskip}
  \vspace{-\baselineskip}
\end{wrapfigure}

In practice, we opt for utilizing a simple and effective Gaussian Quadratures, which are known to have fast convergence rates \citep{dahlquist2008numerical}. Furthermore, based on our variance analysis we employ a single Monte-Carlo estimate for the inner integral. Consequently, the total number of network evaluations per likelihood computation matches exactly the number of quadrature points $N$.

To evaluate our method in a controlled setting, we draw $1000$ samples from the OpenWeb dataset and estimate sequence likelihoods using both Monte Carlo and Gaussian quadrature with varying numbers of function evaluations. For each sample, we repeat the estimation $16$ times and measure both the deviation from a ground-truth approximation (computed with $1024$ Monte Carlo samples) and the variance of the estimator. As shown in Figure~\ref{fig:estimator-efficiency}, quadrature-based estimators consistently exhibit lower bias and variance, yielding accurate estimates of the ELBO. Notably, most of the gains are achieved with as few as $2$ or $3$ quadrature points, which we adopt in practice.

With the flexibility to reconcile efficiency and statistical accuracy, we are ready to introduce \textbf{Group Diffusion Policy Optimization (GDPO)}, a policy gradient method tailored to diffusion language models with group statistics. Compared with Diffu-GRPO \citep{zhao2025d1}, which employs efficient but crude estimates of token likelihoods, GDPO uses sequence-level likelihoods, which are made efficient through the SDMC scheme. This reformulation of the importance weights from the token-level to the sequence-level has the added benefit of preserving the semantics of the advantage estimates, and the ELBO-based objective fits naturally within the discrete diffusion framework while retaining the merits of GRPO-style updates. Formally, the GDPO loss is defined as:
\begin{align}\label{eq:gdpo}
\mathcal{L}^{\text{GDPO}}(\theta) = \mathbb{E}_{x}\mathbb{E}_{y_g \sim \piold} \left[ \frac{1}{G} \sum_{g=1}^G \frac{1}{|y_g|}\min\left( r_{g} A_g, \text{clip}(r_{g}, 1 - \epsilon, 1 + \epsilon) A_g \right) - \beta\KL(\pi_\theta || \pi_{\text{ref}})\right] ,
\end{align}
where the importance weights and advantage estimates are both done at the \emph{sequence} level:
\begin{align*}
r_{g}(x) = \frac{\exp(\mathcal{L}_{\text{ELBO}}(y_{g} |x))}{\exp(\mathcal{L}_{\text{ELBO}}^{\text{old}}(y_{g}|x))}, \quad
    A_g = R_g - \text{mean}(R_1, \dots, R_G).
\end{align*}
Here $\mathcal{L}_{\text{ELBO}}^{\text{old}}$ represents the ELBO evaluated under the old policy and $R_g = R(q,y_g)$, and we utilize unnormalized advantage estimates to avoid the bias \citep{liu2025understanding}. The use of the ELBO is in spirit similar to GSPO \citep{Zheng2025GroupSequencePolicyOptimization}.


\begin{algorithm}[t]
\caption{\textsc{GDPO}: Group Diffusion Policy Optimization for diffusion language models (DLMs), with ELBO estimated via a Semi-deterministic Monte Carlo (SDMC) scheme.}
\label{alg:gdpo}
\begin{algorithmic}[1]
\Require Reference model $\pi_{\text{ref}}$, distribution over prompts $\mathcal{D}$, completions per prompt $G$, inner updates $\mu$, quadrature points and weights $\{(t_n,w_n)\}_{n=1}^N$
\State Initialize from a reference model $\pi_\theta \gets \pi_{\text{ref}}$
\While{not converged}
    \State $\pi_{\theta_{\text{old}}} \gets \pi_\theta$
    \State Draw a prompt sample $q \sim \mathcal{D}$
    \State Generate $G$ completions $y_g \sim \pi_{\theta_{\text{old}}}(\cdot \mid q), \; g \in [G]$
    \State Estimate reward $r_g$ and advantage $A^k_g(\pi_{\theta_{\text{old}}})$ using \eqref{eq:advantage-estimate} for each $y_g$.
    \For{$n = 1,\ldots,\mu$}
            \Comment{For each gradient update}
        \For{$g = 1,\ldots,G$}
            \State $\mathcal{L}_{\text{ELBO}}(y_g|q) \gets 0$
            \For{$n = 1,\ldots,N$} \Comment{Evaluate quadrature over time}
                \State $\mathcal{L}_{\text{ELBO}}(y_g|q)  \gets \mathcal{L}_{\text{ELBO}}(y_g|q) \;+\; w_n \cdot \ell\!\left(\pi_\theta;\, y_g, q, t_n\right)$
            \EndFor
        \EndFor
        \State Evaluate \textsc{GDPO} objective in \eqref{eq:gdpo} using $\{\mathcal{L}_{\text{ELBO}}(y_g|q)\}_{g=1}^G$.
        \State Optimize $\pi_\theta$ via AdamW.
    \EndFor
\EndWhile
\State \Return $\pi_\theta$
\end{algorithmic}
\end{algorithm}

\subsection{Overview of Theoretical Results}

We provide a brief theoretical analysis for the proposed ELBO estimate that is based on a \emph{Semi-deterministic Monte Carlo}, with an emphasis on its asymptotic error bounds when the total number of Monte Carlo samples $K$ and the number of integration points $N$ become large. Note that a ``classic" alternative to this estimator would be a double Monte Carlo one, and its error bound (in the form of MSE) would scale at $O(\frac{1}{NK})$. For our proposed estimator, the analysis points to the following results: 
\begin{itemize}
\item \textbf{decomposition of the MSE}: the mean squared error of the estimator can be decomposed into the sum of Monte Carlo variance and the square of the integration bias, which resembles the classic variance-bias decomposition of statistical estimators;
\item \textbf{rates under very general conditions}: the variance term scales as $O(\frac{1}{NK})$; with a generic integration scheme (e.g., the Riemann sum \footnote{In the case of Riemann sum, the integration is approximated as $\int_0^1 g(t) \mathrm{d}t \approx \sum_{n=1}^N g(c_n)w_n$, where $c_n$ is any value within the $n$th interval; $w_n\equiv\frac{1}{N}$ when the intervals are equally spaced.}), the squared integration bias scales as $O(\frac{1}{N^2})$;
\item \textbf{faster rate with additional assumptions on the log-likelihood}: with an $N$-dependent decay condition on its variance, the variance term would scale faster at the rate of $O(\frac{1}{N^2K})$;
\item \textbf{quadrature rule for integration can make the bias practically negligible}: when the integrand is twice continuously differentiable and therefore one can use quadrature to perform integration, the squared integration bias would scale either at the rate of $O(\frac{1}{N^4})$ or $O(\frac{1}{N^8})$, depending on the exact integration scheme used. As such, this term becomes practically negligible and the variance term becomes the dominating one. 
\end{itemize}
The upshot is that under certain regularity assumptions of the log-likelihood, the proposed SDMC estimator can attain a faster rate than a generic double Monte Carlo estimator due to the \emph{deterministic} integration. All details are deferred to Appendix~\ref{appendix:asym}. 
\begin{table}[h]
\small
\centering
\caption{Asymptotic Error Bounds in relation to Integration Points $N$ and Monte Carlo Samples $K$.}
\begin{tabular}{l|c|c}
\hline
Setting  &  Variance & Bias$^2$\\ \hline
General conditions / Riemann sum  &  $O(1/NK)$ & $O(1/N^2)$ \\
Additional assumption \& smoothness / Quadrature & $O(1/N^2K)$ & $O(1/N^4)$ or $O(1/N^8)$ \\
\hline
\end{tabular}
\label{tab:placeholder}
\end{table}

As noted above, to perform integration using quadrature, the integrand needs to be \emph{sufficiently smooth}. To that end, properties of the integrand are further investigated. In particular, by relating the integrand in~\eqref{quad_elbo} to a $\KL$ form, it can be shown that \textcolor{black}{such an $\KL$ form} is indefinitely differentiable (namely, living in $\mathcal{C}^\infty$). Further, under some additional assumptions on the likelihood ratio associated with the data distribution and the forward diffusion process, one can prove that the integrand is convex and monotone in $t$. Such conditions can be understood conceptually as follows: masking more tokens corresponds to removing more information, and the penalty grows as masking increases. The implication of these results are two-fold: (1) given the structure of the integrand, the standard quadrature rules are well-suited for this problem; combined with results from the error bound analysis, this justifies the observation that our ELBO estimator exhibits faster convergence and lower variance than the double Monte Carlo one; and (2) the convex shape empirically observed in Figure \ref{fig:var-decomp}(b) supports the theoretical claim under the additional assumptions, albeit verifying these assumptions can be non-trivial. Details for this part of the results are in Appendix~\ref{sec:loss-properties}.



\vspace{0.3em}

\section{Experiments}
We conduct a comprehensive set of experiments. As our base model we use LLaDA-8B-Instruct \cite{zhu2025llada} which is a open sourced DLM that has been tuned to follow instructions, but no specific post-training. We investigate the effect of applying GSPO with our SDMC estimator. 

\textbf{Tasks:}~ We conduct experiments on mathematical reasoning, planning and coding benchmarks.  For (1) Mathematical reasoning: we use the GSM8K \citep{cobbe2021training} which contains grade school math problems, and MATH500 \citep{lightman2023let}, containing 500 problems drawn from the MATH dataset \citep{hendrycks2021measuring}. (2) Planning: this includes two tasks: 4x4 Sudoku puzzles, and Countdown in which given $3$ numbers and a target the model must use arithmetic operations to reach the target. (3) Coding: we use the HumanEval \citep{chen2021evaluating}, a benchmark consisting of 164 manually designed Python algorithmic challenges and sanitized MBPP \citep{austin2021program}, which contains 257 crowd-sourced Python programming tasks.

\begin{table}[hb]
\centering
\setlength{\tabcolsep}{3.8pt}
\renewcommand\arraystretch{1.}
\caption{\textbf{Model performance on Mathematics and Planning Benchmarks} based on $N=3$ quadrature points. \textcolor{BestGreen}{Green} is the best performing model.}
\label{tab:math-planning}
\footnotesize
\begin{tabularx}{\linewidth}{>{\raggedright\arraybackslash}p{2cm} ccc ccc ccc ccc}
\toprule
\multirow{2}{*}{\textbf{Model}} &
\multicolumn{3}{c}{\textbf{GSM8K}} &
\multicolumn{3}{c}{\textbf{MATH500}} &
\multicolumn{3}{c}{\textbf{Countdown}} &
\multicolumn{3}{c}{\textbf{Sudoku}} \\
\cmidrule(lr){2-4}\cmidrule(lr){5-7}\cmidrule(lr){8-10}\cmidrule(l){11-13}
& 128 & 256 & 512
& 128 & 256 & 512
& 128 & 256 & 512
& 128 & 256 & 512 \\
\midrule
{\scriptsize LLaDA-8B-Instruct}
& 68.7 & 76.7 & 78.2
& 26.0 & 32.4 & 36.2
& 20.7 & 19.5 & 16.0
& 11.7 & 6.7  & 5.5 \\
\midrule
{\scriptsize + \textit{diffu}-GRPO}
& 72.6 & 79.8 & 81.9
& 33.2 & 37.2 & 39.2
& 33.2 & 31.3 & 37.1
& 18.4 & 12.9 & {11.0} \\
\midrule
{\scriptsize +SFT+\textit{diffu}-GRPO }
& {73.2} & {81.1} & {82.1}
& \best{33.8} & {38.6} & {40.2}
& {34.8} & {32.0} & {42.2}
& {22.1} & {16.7} & 9.5 \\
\midrule
{\scriptsize +wD1}
& {\scriptsize{N/A}} & {80.8} & {82.3}
& {\scriptsize{N/A}} & {34.4} & {39.0}
& {\scriptsize{N/A}} & {51.2} & {46.1}
& {\scriptsize{N/A}} & {25.2} & {24.2} \\
\midrule
{\scriptsize + SFT + GDPO}
& \best{78.39} & \best{82.79} & \best{84.99}
& 33.2 & \best{39.6} & \best{41.4}
& \best{64.45} & \best{74.22} & \best{80.86}
& \best{27.69} & \best{26.42} & \best{26.17} \\
\bottomrule
\end{tabularx}
\end{table}

\subsection{Main Results}
\textbf{GDPO consistently improves the checkpoint and outperforms diffu-GRPO} Table~\ref{tab:math-planning} demonstrates the performance of the baseline model, as well as other token level approaches. Our results demonstrate that we can better improve the baseline than the token-based algorithms.

\begin{wraptable}{r}{6.5cm}
\vspace{-1.5em} 
\centering
\setlength{\tabcolsep}{2.8pt}
\renewcommand\arraystretch{1.}
\caption{{Model performance on Coding} with $N=3$ quadrature points.
\textcolor{BestGreen}{Green} is best.}
\label{tab:coding}
\small
\begin{tabularx}{\linewidth}{>{\raggedright\arraybackslash}p{2cm} ccc ccc}
\toprule
\multirow{2}{*}{\textbf{Model}} &
\multicolumn{3}{c}{\textbf{HumanEval}} &
\multicolumn{3}{c}{\textbf{MBPP}} \\
\cmidrule(lr){2-4}\cmidrule(l){5-7}
& 128 & 256 & 512
& 128 & 256 & 512 \\
\midrule
LLaDA-8B-Instruct
& {27.4} & 35.5 & {37.8}
& 36.2 & 41.2 & 40.4 \\
\midrule
+ \textit{diffu}-GRPO
& \best{29.3} & {39.0} & 34.8
& {42.0} & {45.5} & {41.6} \\
\midrule
+ GDPO
& 26.2 & \best{39.6} & \best{39.0} 
& \best{43.6} & \best{50.6} & \best{47.1}  \\
\bottomrule
\end{tabularx}
\vspace{-1.5em} 
\end{wraptable}
\textbf{GDPO outperforms most baselines w/ SFT.} As shown in Table~\ref{tab:math-planning}, GDPO with 3 quadrature points surpasses existing RL baselines after a few epochs of supervised fine-tuning (SFT), highlighting its strong ability to enhance reasoning performance. 

\textbf{GDPO enhances reasoning in coding tasks.} We fine-tune the model on the KodCode-Light-RL-10K dataset \citep{xu2025kodcode}, which spans a wide range of coding problems at varying difficulty levels, each validated through unit tests. We found that $N=3$ led to slightly better results than $N=2$, and we report GDPO with $N=3$ in Table~\ref{tab:coding}. The results demonstrate that GDPO with 3 quadrature points consistently improves performance across most baselines. Most strikingly, on the MBPP benchmark, RL fine-tuning achieves a substantial \emph{10\% accuracy gain} over the pretrained model without SFT.

\textbf{GDPO improves performance beyond the training sequence length.}
Building on observations by \cite{zhao2025d1} that Diffu-GRPO enhances generalization to longer contexts, we find that GDPO achieves this effect to an even greater extent. On $512$-token sequences, GDPO consistently outperforms all baselines while largely preserving performance at shorter lengths. We attribute this to the use of sequence-level likelihoods, which promote more uniform improvements across token positions, in contrast to token-level methods that retain generation-order biases, as noted by \cite{gong2025diffucoder}.

\begin{wrapfigure}{r}{0.5\textwidth}
\begin{center}
\vspace{-\baselineskip}
\vspace{-\baselineskip}
\includegraphics[width=\linewidth]{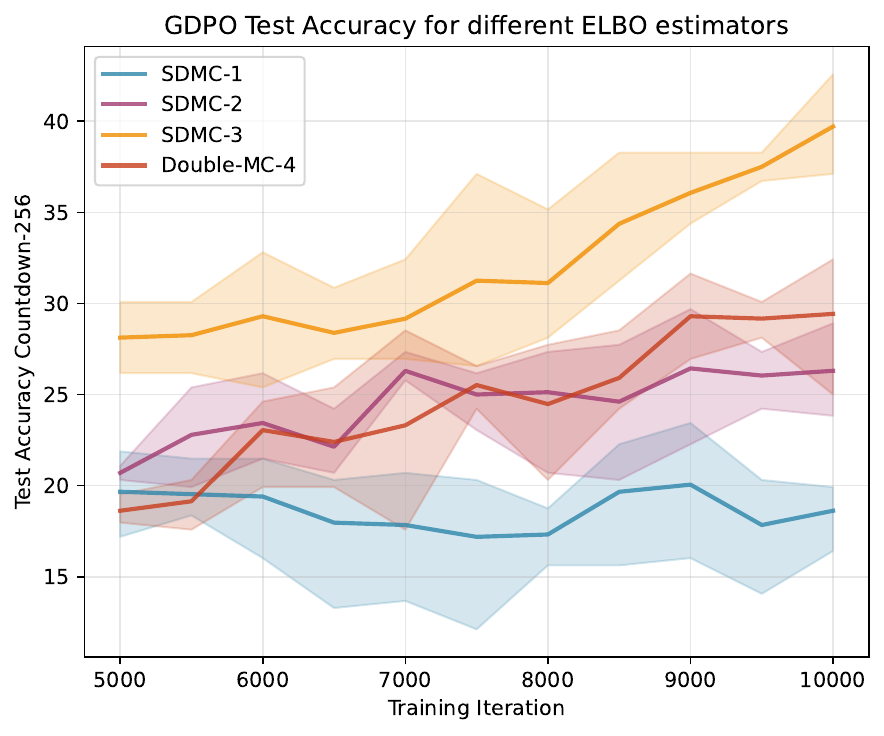}
    \caption{Test accuracy with different training iterations and ELBO estimators on the Countdown dataset.}
    \label{fig:countdown-256}
  \end{center}
\vspace{-\baselineskip}
\vspace{-\baselineskip}
\end{wrapfigure}
\textbf{GDPO is computationally efficient} Notably we are able to obtain these remarkable results training on only $2$ H100 GPUs, although training can be accelerated with 8 H100 GPUs. 
This is remarkable as practitioners often have a limited computational budget and GDPO opens the opportunities to such needs.

\textbf{Importance of the ELBO approximation:} To demonstrate that accurately approximating the likelihood is of vital importance we fix ourselves to the Countdown dataset. We pick this one as we observed the greatest increases in performances in such dataset which allows for easier interpretation of the results. We train $4$ models using the same set of hyperparameters, only varying the ELBO approximation method. We evaluate the test accuracy as a function of the training iteration every $500$ iterations. 

As observed in Figure \ref{fig:countdown-256}, estimators that are more accurate result in better improvements on the RL pipelines. Furthermore, the sheer number of function evaluations is not enough to guarantee good results. For instance, SDMC-3 can significantly outperform the naive Monte Carlo estimator even when it uses more evaluations. This demonstrates that accurately designing the estimator is of vital importance in GDPO.

\section{Conclusion}

We proposed Group Diffusion Policy Optimization (GDPO), an RL algorithm designed specifically for diffusion language models (DLMs). By leveraging sequence-level likelihoods through the evidence lower bound (ELBO), our approach disentangles and mitigates the major sources of variance that hinder prior methods. In particular, we replace the inefficient double Monte Carlo estimation with a simple and fast \emph{Semi-deterministic Monte Carlo} sampling, yielding a provably lower-variance and more computationally efficient estimator. Extensive experiments demonstrate that GDPO consistently improves over pretrained checkpoints and surpasses strong baselines such as diffu-GRPO across math, reasoning, and coding benchmarks. We expect even stronger performance with more powerful pretrained checkpoints, though this remains outside the scope of the present study. Taken together, these findings highlight GDPO as both a theoretically principled and practically effective paradigm for aligning DLMs. We believe that the use of \emph{Semi-deterministic Monte Carlo} sampling offers a simple and viable path to handling the large variance issue in the sequence-level ELBO estimation of DLMs, and future work can explore more effective \emph{deterministic–stochastic integration} schemes with \emph{data-driven} quadrature weights and locations to minimize the large variance further.

\subsubsection*{Acknowledgments}

We are thankful to CoreWeave, whose purpose-built AI cloud platform powered all of our experiments. KR and MT gratefully acknowledge the partial support of NSF Grant DMS-2513699, DOE Grants NA0004261, SC0026274, and Richard Duke Fellowship.

\bibliographystyle{iclr2026_conference}
\bibliography{iclr2026_conference}

\newpage

\appendix

\begin{center}
    \large \textbf{Supplementary Material for ``\textit{Improving Reasoning for Diffusion Language Models via Group Diffusion Policy Optimization}''}
\end{center}

In this supplementary material, we present additional details on variance decomposition in \S\textcolor{red}{A}, provide a theoretical analysis of asymptotic error bounds in \S\textcolor{red}{B}, justify key properties of the integrand in \S\textcolor{red}{C}, describe the experimental setup in \S\textcolor{red}{D}, a comparison against \citet{gong2025diffucoder} in \S\textcolor{red}{E} and include representative generated samples in \S\textcolor{red}{F}.


\section{More Details on Variance decomposition} \label{app:var-decomp}
We start by studying the different sources of variance in approximating the ELBO, in the case of masked diffusion models the equation is given by: 
\begin{align}
    \E_{t} \E_{y_t \sim \pi(\cdot|y)} \left[ \frac{1}{t} \sum_{i = 1}^L \mathbf{1}[y_t^i = M] \log \pi_\theta(y^i|y_t, q)\right] \leq \log \pi(y|q)
\end{align}

From looking at this expression we observe that there are two sources of variance, \textbf{(1) Time Sampling:} we must sample $t$ which determines the noise level, and \textbf{(2) Random Masking:} we must sample $y_t$ which injects noise to the clean $x_0$. We start by analyzing how each source of randomness affects the variance of an estimator. To simplify our discussion, we will rewrite the integrand in a simplified way:
\begin{align}
    \E_{t} \E_{y_t \sim \pi(x_t|y)} \left[Z(t, x_t) \right] \leq \log \pi(y|q),
\end{align}
where $Z(t, y_t) := \frac{1}{t} \sum_{i = 1}^L \mathbf{1}[y_t^i = M] \log \pi_\theta(y^i|y_t, q)$ is a random variable w.r.t. the joint distribution of $(t,y_t)$. We are interested in studying  $\Var(Z(t,y_t))$ and how it decomposes with respect to  $t,y_t$, by the law of total variance we can decompose this as:
\begin{align}\label{eq:var-decomposition}
\Var(Z(t, y_t)) =
\underbrace{\mathbb{E}_t\!\left[ \Var(Z \mid t) \right]}_{\text{Var given by } Z|t}
\;+\;
\underbrace{\Var_t\!\left( \mathbb{E}[Z \mid t] \right)}_{\text{Var given by } t}.
\end{align}
The above expression indicates that we can understand variance as the sum of two components, where each term corresponds to a distinct source of variance. We argue that most of the variance comes from selecting the noise level $t$. When the noise level is $t=0$ we range from fully unmasked to fully masked, creating a large disparity in the noise samples and subsequently causing large swings in the variance. 

To test this we evaluate each term in \ref{eq:var-decomposition} for $1000$ different prompts pulled from the OpenWeb dataset. We plot the mean and variance as function of time in Figure \ref{fig:var-decomp}. This figure reveals very important facts:
\begin{enumerate}
    \item Most of the variance is coming from randomly selecting the timestamps/ noise levels and not from injecting noise into the text. 
    \item The loss function when observed as a function of time yields a simple structure.
\end{enumerate}
For this reason, we advocate for \emph{fixed timestamps} for likelihood approximation and embracing a \emph{Semi-deterministic Monte Carlo sampler} instead of the na\"ive double Monte Carlo approximations. 
\section{Analysis of Asymptotic Error Bounds}\label{appendix:asym}

In this section, we analyze the error bound of approximating the ELBO via the proposed Semi-Deterministic Monte Carlo scheme, focusing on the training objective given in~\eqref{eqn:elbo}, namely, 
\begin{equation*}
\mathcal{L}_{\text{ELBO}}(y|q) = \E_{t\sim \mathcal{U}[0,1]} \E_{y_t \sim \pi_t(\cdot|y)} \left[ \frac{1}{t} \sum_{i = 1}^L \mathbf{1}[y_t^i = M] \log \pi_\theta(y^i|y_t, q)\right].
\end{equation*}

\paragraph{Notation and definition.} We proceed by first defining the relevant quantities that will be used in the ensuing technical analysis. Define 
\begin{equation*}
    Z_t := \frac{1}{t} \sum_{i = 1}^L \mathbf{1}[y_t^i = M] \log \pi_\theta(y^i|y_t, q), 
\end{equation*}
which is a \textit{random variable} with respect to the distribution of $y_t$, where $y_t \sim \pi_t(\cdot|y)$. Let 
\begin{equation}\label{eqn:gt}
g(t) := \mathbb{E}_{y_t \sim \pi_{t}(\cdot\,|y)} (Z_t),    
\end{equation}
and the Monte Carlo approximation of $g(t)$ be $g_{\text{MC}}(t) := \frac{1}{K} \sum_{k=1}^K (Z_t^{[k]})$, where $Z_t^{[k]}$ are iid draws of $Z_t$. For convenience, we let $\mathcal{I}(y_t):=\{i \in \{1,\cdots,L\}: y_t^i = M\}$ and $c_t=|\mathcal{I}(y_t)|$, the cardinality of $\mathcal{I}(y_t)$. Note that $Z_t$ can be rewritten as $Z_t\equiv \frac{1}{t}\sum_{i\in\mathcal{I}(t)} \log p_\theta (y^i|y_t, q)$. 

With these definitions, the training objective of interest is given by
\begin{equation}
\mathcal{L}_{\text{ELBO}}(y|q) = \mathbb{E}_{t\sim\mathcal{U}[0,1]} \big( g(t) \big) = \int_0^1 g(t) \mathrm{d}t,
\end{equation}
and its approximation via a Semi-deterministic Monte Carlo scheme (namely,~\eqref{quad_elbo}) can be written as
\begin{equation}\label{eqn:estimator}
    \widehat{\mathcal{L}}_{\text{ELBO}}(y|q) := \sum_{n=1}^N w_n g_{\text{MC}}(t_n), 
\end{equation}
which is a numerical approximation of the integral in the Monte Carlo approximation $\widetilde{\mathcal{L}}_{\text{ELBO}}(y|q):=\int_0^1 g_{\text{MC}}\mathrm{d}t$ at points $t_n, n=1,\cdots,N$. For the time being, we assume a generic numerical approximation of the integral, and using the notation involving $w_n$. 

The approximation error of~\eqref{eqn:estimator} is given by
\begin{align}
  \mathbb{E}\left( \left[ \widehat{\mathcal{L}}_{\text{ELBO}} - \mathcal{L}_{\text{ELBO}} \right]^2 \right) &=   \mathbb{E} \left( \left[\sum_{n=1}^{N}w_n g_{\text{MC}}(t_n) - \int_0^1 g(t) \mathrm{d}t\right]^2 \right) \label{eqn:bias-variance}\\
  &= \underbrace{\mathbb{E}\left[\left(\sum_{n=1}^{N} w_n g_{\text{MC}}(t_n) - \sum_{n=1}^{N}w_n g(t_n)\right)^2\right]}_{\text{MC variance}} + 
\underbrace{\left(\sum_{n=1}^{N}w_n g(t_n) - \int_0^1 g(t) dt\right)^2}_{\text{integration bias}^2}; \notag
\end{align}
this holds due to the independence of Monte Carlo samples, since
\begin{equation*}
    \mathbb{E} \biggl[ \sum_{n=1}^{N}w_n  g_{\text{MC}}(t_n) \biggr]
= \sum_{n=1}^{N}w_n \mathbb{E}[g_{\text{MC}}(t_n)]  =
\sum_{n=1}^{N}w_n g(t_n),
\end{equation*}
and therefore the second term boils down to the integration bias. It can be seen that this becomes the classical mean-squared-error of an estimator, which can be decomposed into the variance (from a total of $K$ Monte Carlo samples) and the bias$^2$ (from the approximation of the integral) at $N$ points $t_1,\cdots,t_N$. The bounds of these two terms will be analyzed separately in the sequel. 

\subsection{The variance term}

To analyze this term, we start with the {\em pointwise} Monte Carlo error. 
\begin{lemma}\label{lemma:pointwisemc} Assume the following conditions hold:
\begin{enumerate}
    \item[C1.] $\mathbb{E}_{x_t} (c_t)<\infty, \quad \mathbb{E}_{x_t}(c_t^2)<\infty$;
    \item[C2.] $\exists\  \delta>0: \delta< \ \pi_{\theta}(y^i \mid y_t, q) < 1-\delta$, for all $x_0^i$, $x_t$.
\end{enumerate}
Then, the following holds
\begin{equation}\label{eqn:MC-bound}
\mathbb{E}\biggl(g(t)-g_{\text{MC}}(t)\biggr)^2 = \tfrac{1}{K} \sigma^2(t), \qquad \text{where} \quad \sigma^2(t):=\text{Var}(Z_t).
\end{equation}
\end{lemma}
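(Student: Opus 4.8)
\textbf{Proof proposal for Lemma~\ref{lemma:pointwisemc}.}

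The plan is to observe that the claimed identity is essentially the textbook formula for the variance of a sample mean, so the real content of the lemma is that the conditions C1 and C2 guarantee $\sigma^2(t) := \mathrm{Var}(Z_t)$ is finite, making that formula legitimate. First I would write $g_{\text{MC}}(t) = \frac{1}{K}\sum_{k=1}^K Z_t^{[k]}$ with $Z_t^{[k]}$ i.i.d.\ copies of $Z_t$, note that $\mathbb{E}[g_{\text{MC}}(t)] = g(t)$ by linearity, and hence
\begin{equation*}
\mathbb{E}\bigl(g(t) - g_{\text{MC}}(t)\bigr)^2 = \mathrm{Var}\bigl(g_{\text{MC}}(t)\bigr) = \frac{1}{K^2}\sum_{k=1}^K \mathrm{Var}\bigl(Z_t^{[k]}\bigr) = \frac{1}{K}\mathrm{Var}(Z_t) = \frac{1}{K}\sigma^2(t),
\end{equation*}
where the cross terms vanish by independence. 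This is the whole computation; it is valid provided $\mathrm{Var}(Z_t) < \infty$.

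Next I would verify finiteness of the second moment of $Z_t$. Recall $Z_t = \frac{1}{t}\sum_{i\in\mathcal{I}(y_t)} \log \pi_\theta(y^i\mid y_t, q)$, so $|Z_t| \le \frac{1}{t} c_t \,\bigl(\sup_{i,y_t} |\log \pi_\theta(y^i\mid y_t,q)|\bigr)$. Condition C2 gives $\delta < \pi_\theta(y^i\mid y_t,q) < 1-\delta$, hence $|\log \pi_\theta(y^i\mid y_t,q)| \le \log(1/\delta) =: B < \infty$ uniformly. Therefore $|Z_t| \le \frac{B}{t} c_t$ and $Z_t^2 \le \frac{B^2}{t^2} c_t^2$, so $\mathbb{E}(Z_t^2) \le \frac{B^2}{t^2}\,\mathbb{E}_{y_t}(c_t^2) < \infty$ by condition C1 (for fixed $t > 0$). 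This bounds $\sigma^2(t) = \mathrm{Var}(Z_t) \le \mathbb{E}(Z_t^2) < \infty$, which is all that is needed to legitimize the variance-of-sample-mean step above.

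There is no real obstacle here; the lemma is a warm-up that isolates the pointwise Monte Carlo contribution and pins down exactly which regularity is used (bounded log-probabilities via C2, finite first and second moments of the mask count via C1). The only minor point worth stating explicitly is that the bound is pointwise in $t$ and degrades like $1/t^2$ as $t\to 0$; this is harmless for the lemma itself but foreshadows that integrating $\sigma^2(t)/K$ against the quadrature weights later (to bound the aggregate MC variance term in~\eqref{eqn:bias-variance}) will require controlling $\int_0^1 \sigma^2(t)\,\mathrm{d}t$ or the weighted sum $\sum_n w_n^2 \sigma^2(t_n)$, which is where the $t\to 0$ behavior and the additional $N$-dependent decay assumption mentioned in the overview will come into play. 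For the present statement, I would simply present the two short steps above.
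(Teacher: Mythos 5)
Your proposal is correct and follows essentially the same route as the paper: the identity is the standard variance-of-sample-mean formula, and the content of the lemma is that C1 and C2 legitimize it by ensuring $\mathrm{Var}(Z_t)<\infty$. The paper merely states the abstract sufficient conditions (measurability, integrability, finite variance) and asserts that C1--C2 imply them ``it can be verified''; you actually carry out that verification via the uniform bound $|\log\pi_\theta(y^i\mid y_t,q)|\le\log(1/\delta)$ from C2 and the second-moment bound $\mathbb{E}(Z_t^2)\le (B^2/t^2)\,\mathbb{E}(c_t^2)<\infty$ from C1, which is a welcome bit of explicitness, and your closing remark about the $1/t^2$ degradation near $t=0$ correctly foreshadows why the later propositions need refined conditions on $c_t$.
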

\begin{proof}
To establish the results in~\eqref{eqn:MC-bound}, the following assumptions need to hold: 
\begin{enumerate}
    \item[A1.] Measurability: $Z_t$ is jointly measurable in $t$ and $y_t$;
    \item[A2.] Integrability: $\mathbb{E}(|Z_t|) < \infty, \ \forall \ t\in [0,1]$;
    \item[A3.] Finite variance: $\sigma^2(t)< \infty, \ \forall \ t\in [0,1]$.
\end{enumerate}
By writing $Z_t$ as $Z_t=\frac{1}{t} \sum_{i\in\mathcal{I}(y_t)} \log \pi_{\theta}(y^i \mid y_t,q)$, it can be verified that C1 and C2 are sufficient for A1-A3 to hold. 
\end{proof}
\begin{remark} Interpretation of conditions C1 and C2 are given as follows: C1 requires the number of masked tokens to be controlled, and C2 requires the conditional likelihood to be well behaved. 
\end{remark}

Coming back to the variance term, by Lemma~\ref{lemma:pointwisemc}, at each time point $t$ (i.e., $t_n$'s), the Monte Carlo estimate $g_{\text{MC}}(t_n)$ introduces random error whose pointwise variance is given by $\frac{1}{K} \sigma^2(t_n)$. Consequently, the variance of the estimator $\widehat{\mathcal{L}}_{\text{ELBO}}$ is given by 
\begin{align}
  \mathbb{E}\left[\left(\sum_{n=1}^{N} w_n g_{\text{MC}}(t_n) - \sum_{n=1}^{N}w_n g(t_n)\right)^2\right] &= \mathbb{E} \left[\Big( \sum_{n=1}^N w_n \underbrace{\big(g(t_n)-g_{\text{MC}}(t_n)\big)}_{:= u_n}\Big)^2\right]  \notag \\
  & = \mathbb{E}\big( \sum_{n=1}^N w_n u_n \big)^2 = \frac{1}{K}\biggl( \sum_{n=1}^N w_n^2 \sigma^2(t_n)\biggr);  \label{eqn:mcvariance}
\end{align}
the last equality holds since for $t_n\neq t_n$, due to the independence of MC samples across $n$, we have that  $\text{Cov}(u_n,u_m)=0$ for $n\neq m$; further note that $\mathbb{E}(u_n)=0$ and $\mathbb{E}u_n^2=\tfrac{1}{K}\sigma^2(t_n)$. 

Proposition~\ref{prop:generic-rate-variance} shows that under a refinement of condition C1,~\eqref{eqn:mcvariance} is bounded and therefore its rate can be derived accordingly. 

\begin{proposition}\label{prop:generic-rate-variance}
Suppose the following condition holds for $c_t$ (defined identically to that in Lemma~\ref{lemma:pointwisemc}): 
\begin{enumerate}
    \item[C1'.] $\mathbb{E}_{x_t}(c_t) \leq C_0t^2$ for some constant $C_0>0$.
\end{enumerate}
Assume also that $w_n=O(1/N)$ and $t_n$ are approximately equally spaced. Then, the following holds:
\begin{equation}\label{eqn:generic-rate}
    \frac{1}{K} \sum_{n=1}^N w^2_n \sigma^2(t_n) \leq  \frac{C_0}{K} \sum_{n=1}^N w^2_n t_n = O\big( \frac{1}{K} \cdot \frac{1}{N^2} \sum_{n=1}^N t_n\big) = O\big( \frac{1}{K} \frac{1}{N^2} N \int_0^1 t \mathrm{d}t\big) = O\Big(\frac{1}{KN}\Big).
\end{equation}    
\end{proposition}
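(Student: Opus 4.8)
The plan is to chain together three ingredients: (i) the pointwise Monte Carlo identity from Lemma~\ref{lemma:pointwisemc}, which gives $\mathbb{E}(g(t)-g_{\text{MC}}(t))^2 = \tfrac{1}{K}\sigma^2(t)$; (ii) a pointwise bound $\sigma^2(t_n) = \mathrm{Var}(Z_{t_n}) \le C_0 t_n$ obtained from the refined condition C1$'$; and (iii) the standard Riemann-sum estimate for $\sum_{n=1}^N w_n^2 t_n$ under the equal-spacing / $w_n = O(1/N)$ assumptions. Once these are in place the chain of equalities displayed in~\eqref{eqn:generic-rate} is essentially bookkeeping.

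First I would establish the pointwise variance bound. Writing $Z_t = \frac{1}{t}\sum_{i\in\mathcal{I}(y_t)}\log\pi_\theta(y^i\mid y_t,q)$, condition C2 gives $|\log\pi_\theta(y^i\mid y_t,q)| \le \log(1/\delta) =: B$ uniformly, so $|Z_t| \le \frac{B}{t} c_t$. Hence $\mathbb{E}(Z_t^2) \le \frac{B^2}{t^2}\mathbb{E}(c_t^2)$. To turn this into the linear-in-$t$ bound $\sigma^2(t) \le C_0 t$ one needs C1$'$ controlling the \emph{second} moment of $c_t$ at rate $t^2$ — here I would note that since $c_t \le L$ deterministically, $\mathbb{E}(c_t^2) \le L\,\mathbb{E}(c_t) \le L C_0 t^2$, so $\sigma^2(t) \le \mathbb{E}(Z_t^2) \le B^2 L C_0 =: \widetilde{C}_0$; in fact, being slightly more careful (for small $t$ the event $\mathcal{I}(y_t)\neq\emptyset$ has probability $O(t)$), one gets the sharper $\sigma^2(t) = O(t)$. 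Absorbing constants, this yields $\sigma^2(t_n) \le C_0 t_n$ as used in the first inequality of~\eqref{eqn:generic-rate}. (The statement of the proposition already phrases C1$'$ in terms of $\mathbb{E}(c_t)$, so I would present the $c_t \le L$ reduction explicitly to bridge first and second moments.)

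Next I would plug this into~\eqref{eqn:mcvariance}: the variance term equals $\frac{1}{K}\sum_{n=1}^N w_n^2 \sigma^2(t_n) \le \frac{C_0}{K}\sum_{n=1}^N w_n^2 t_n$. With $w_n = O(1/N)$ we have $w_n^2 = O(1/N^2)$, so the sum is $O\!\big(\frac{1}{N^2}\sum_{n=1}^N t_n\big)$; since the $t_n$ are approximately equally spaced on $[0,1]$, $\sum_{n=1}^N t_n = N\cdot\frac{1}{N}\sum_n t_n \approx N\int_0^1 t\,\mathrm{d}t = N/2$, giving $O\!\big(\frac{1}{N^2}\cdot N\big) = O(1/N)$ and hence the overall rate $O\!\big(\frac{1}{KN}\big)$.

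The main obstacle is the step from C1$'$ (a first-moment condition on $c_t$) to a usable bound on $\sigma^2(t) = \mathrm{Var}(Z_t)$, which genuinely involves $\mathbb{E}(c_t^2)$ through the $1/t^2$ prefactor in $Z_t$; the resolution leans on the crude but valid bound $c_t \le L$ together with C2's uniform control $|\log\pi_\theta|\le\log(1/\delta)$, after which everything is elementary. The remaining steps — the Riemann-sum approximation of $\sum t_n$ and the constant-tracking — are routine, so I would state them briefly and not belabor the asymptotic bookkeeping.
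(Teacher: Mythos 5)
Your proposal correctly isolates the only step that needs work — going from C1$'$ (a first-moment bound on $c_t$) to a pointwise bound on $\sigma^2(t)$ — and fills it in by the natural route: $c_t \le L$ deterministically, so $\mathbb{E}(c_t^2) \le L\,\mathbb{E}(c_t) \le L C_0 t^2$, and combining with C2's uniform bound $|\log \pi_\theta| \le \log(1/\delta) =: B$ gives $\sigma^2(t) \le \mathbb{E}(Z_t^2) \le B^2 L C_0$, a constant. Plugging that into~\eqref{eqn:mcvariance} with $w_n = O(1/N)$ already yields $\frac{1}{K}\sum_n w_n^2 \sigma^2(t_n) = O\big(\frac{1}{K}\cdot\frac{1}{N^2}\cdot N\big) = O\big(\frac{1}{KN}\big)$, matching the displayed rate; the Riemann-sum step $\sum_n t_n \asymp N/2$ is handled the same way as the paper.

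The one place where your argument overreaches is the aside that ``being slightly more careful, one gets the sharper $\sigma^2(t) = O(t)$.'' That sharpening is not derivable from C1$'$: under C1$'$ we have $P\big(\mathcal{I}(y_t)\neq\emptyset\big) \le \mathbb{E}(c_t) \le C_0 t^2$, which is $O(t^2)$, not the $O(t)$ you invoke (that $O(t)$ would hold for the natural $\mathsf{Bin}(L,t)$ masking, but that scheme does not satisfy C1$'$). And the route through second moments only gives $\mathbb{E}(c_t^2) = O(t^2)$, not the $O(t^3)$ that a linear-in-$t$ variance bound would require once you divide by $t^2$. In short, the inequality $\sigma^2(t_n) \le C_0 t_n$ that appears in the chain~\eqref{eqn:generic-rate} is not an immediate consequence of C1$'$ and C2. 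Fortunately, as you implicitly observe, this extra factor of $t_n$ is not load-bearing: the constant bound on $\sigma^2$ already delivers $O(1/(KN))$, because the improvement from $\sum_n w_n^2$ to $\sum_n w_n^2 t_n$ only changes the constant, not the rate. I would therefore present the constant bound as the workhorse and drop the unjustified $O(t)$ remark; the rest of your proof stands and follows the paper's argument.
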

\begin{remark} Condition C1' guarantees that $\int_0^1 \sigma^2(t)<\infty$, which then implies the finiteness of $\sum_{n=1}^N w_n^2 \sigma^2(t_n)$ as $N\rightarrow \infty$, provided that the weights $w_n$ corresponds to a valid integration scheme; i.e., $w_n>0$, $\sum_{n=1}^N w_n=1$ and the mesh of the points $t_n$ becomes finer as $N\rightarrow\infty$. Empirically, C1' can be operationalized by considering a masking scheme where token $x_t^i$ is masked independently across $i=1,\cdots,L$ with probability $t$, and $c_t\sim \mathsf{Bin}(L, t)$.  Finally, the assumption on weights $w_n=O(\frac{1}{N})$ and that $t_n$'s being roughly equally spaced are both fairly reasonable for all integration schemes (simple Riemann, midpoint, trapezoid, Simpson). 
\end{remark}

The next proposition shows that when an $N$-dependent decay condition is satisfied by the log-likelihood, then a rate faster than the one established in~\eqref{eqn:generic-rate} can be obtained. 

\begin{proposition}\label{prop:improve-rate-variance}
Suppose the following condition holds
\begin{enumerate}
    \item[C1''.] $\mathbb{E}_{x_t}(c^2_t) \leq C_{\text{token}}$ for some positive constant $C_{\text{token}}$.
\end{enumerate}
In addition, assume the log-likelihood satisfies the following, for some positive constant $C_{\text{log-lik}}$:
\begin{equation*}
    \text{Var}\big( \log \pi_\theta(y^i|y_t, q) \big) \leq C_{\text{log-lik}}\frac{t^2}{N}.
\end{equation*}
Then, the following holds
\begin{equation}
    \frac{1}{K}\sum_{n=1}^N w_n^2 \sigma^2(t_n) =O\left(\frac{1}{KN^2}\right).
\end{equation}
\end{proposition}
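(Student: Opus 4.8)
The plan is to reuse the two-step template of Proposition~\ref{prop:generic-rate-variance}: start from the exact identity~\eqref{eqn:mcvariance}, which says the Monte Carlo variance term equals $\frac{1}{K}\sum_{n=1}^N w_n^2\sigma^2(t_n)$ with $\sigma^2(t)=\Var(Z_t)$, and then feed in a sharper pointwise bound on $\sigma^2$. Concretely, the whole improvement over Proposition~\ref{prop:generic-rate-variance} reduces to upgrading the estimate used there to a \emph{uniform} bound $\sigma^2(t_n)\le C/N$. Granting that, and keeping the weight assumption $w_n=O(1/N)$ from Proposition~\ref{prop:generic-rate-variance} (so that $\sum_{n=1}^N w_n^2=O(1/N)$), we obtain $\frac{1}{K}\sum_{n=1}^N w_n^2\sigma^2(t_n)\le\frac{C}{KN}\sum_{n=1}^N w_n^2=O\!\big(\tfrac{1}{KN^2}\big)$, which is the claim.

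So the work is the pointwise bound $\Var(Z_t)=O(1/N)$. I would write $Z_t=\frac{1}{t}\sum_{i=1}^L\mathbf{1}[y_t^i=M]\log\pi_\theta(y^i|y_t,q)$ and bound its standard deviation by the triangle inequality in $L^2$ (Minkowski), $\sqrt{\Var(Z_t)}\le\frac{1}{t}\sum_{i=1}^L\sqrt{\Var\big(\mathbf{1}[y_t^i=M]\log\pi_\theta(y^i|y_t,q)\big)}$, reducing everything to per-token variances; the alternative route through $\Var(Z_t)\le\E[Z_t^2]$ with a Cauchy--Schwarz over the random-size masked set (as in Proposition~\ref{prop:generic-rate-variance}) is where the condition $\E_{x_t}(c_t^2)\le C_{\text{token}}$ naturally enters to absorb the number of terms. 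The decisive point is dimensional: each per-token summand carries the prefactor $1/t$, whose square contributes the $t^{-2}$ blow-up, and this is cancelled precisely by the new hypothesis $\Var\big(\log\pi_\theta(y^i|y_t,q)\big)\le C_{\text{log-lik}}\,t^2/N$, whose $t^2$ numerator kills the $t^{-2}$ while its $1/N$ factor supplies the target rate. Condition~C2 ($\delta<\pi_\theta<1-\delta$) keeps the log-probabilities bounded, which is what controls the residual contribution of the mask-indicator fluctuation, and $\E_{x_t}(c_t^2)\le C_{\text{token}}$ is invoked wherever the count of masked coordinates appears so that the expectation over the mask pattern collapses to a constant. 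Assembling these yields $\Var(Z_t)\le C/N$ with $C$ depending only on $L,\delta,C_{\text{token}},C_{\text{log-lik}}$, uniformly in $t\in(0,1]$.

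I expect the main obstacle to be making the ``$t^{-2}$ is cancelled'' step genuinely rigorous. Unlike an ordinary double Monte Carlo setup, $Z_t$ is a \emph{deterministic} function of the masking pattern — there is no separate inner noise to average out — so every bit of $\Var(Z_t)$ has to be extracted from the sensitivity of the network's log-probabilities to which tokens are masked, which is exactly what the decay hypothesis quantifies. The subtle part is isolating the portion of the variance driven purely by the mask indicators $\mathbf{1}[y_t^i=M]$: this piece generically fluctuates at scale $t$ and, if left untreated, would contribute a non-integrable $O(1/t)$ term to $\sigma^2(t)$; showing it is also absorbed — via the boundedness in~C2 together with the smallness of the \emph{centered} log-probabilities implied by the decay hypothesis (equivalently, reading that hypothesis at the level of second moments rather than variances) — is the delicate step, and it is precisely there that the $N$-dependent form of the assumption does its work. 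The remaining bookkeeping (the Minkowski/Cauchy--Schwarz manipulations, the elementary sum $\sum_n w_n^2=O(1/N)$, and collecting constants) is routine.
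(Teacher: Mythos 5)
Your proposal matches the paper's proof in structure and substance: both reduce the claim to the uniform pointwise bound $\sigma^2(t_n)\le C_0/N$ by cancelling the $1/t_n^2$ prefactor against the $t_n^2/N$ decay hypothesis, and then combine this with $\sum_{n=1}^N w_n^2 = O(1/N)$ (from $w_n=O(1/N)$) to get $O(1/(KN^2))$ after dividing by $K$. The paper obtains the pointwise bound via the one-line assertion $\sigma^2(t)\le (C_{\text{token}}^2/t^2)\,\mathrm{Var}\!\big(\log\pi_\theta(y^i\mid y_t,q)\big)$ ``under C1''\,'' without spelling out the Cauchy--Schwarz/mask-indicator bookkeeping you flag as delicate, so your caution there is well-placed but does not represent a different route.
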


\begin{proof}
Given the definition of $\sigma^2(t)$ (see, e.g.,~\eqref{eqn:MC-bound}), under condition C1'', one has
\begin{equation*}
\sigma^2(t) \leq \frac{\text{C}^2_{\text{token}}}{t^2} \text{Var}\big( \log \pi_\theta(y^i|y_t, q) \big).
\end{equation*}
Then, for weights $\max_n w_n=O(\frac{1}{N})$ (which are the weights for all standard integration rules) and $t_n=\frac{n}{N}$, we get
\begin{equation*}
\sigma^2(t_n)\leq \frac{C^2_{\text{token}}}{t_n^2}
C_{\text{log-lik}}\frac{t_n^2}{N}=C_0 \frac{1}{N}, \quad \text{where}~C_0=C^2_{\text{token}}C_{\text{log-lik}}.
\end{equation*}
Hence, we have a uniform bound on $\sigma^2(t_n)\leq C_0 \frac{1}{N}$ for every $n=1,\cdots,N$.
Using the above uniform bound, we obtain
\begin{equation*}
\sum_{n=1}^N w_n^2 \sigma^2(t_n)\leq \frac{C_0}{N} \sum_{n=1}^N w_n^2 =\frac{C_0}{N} \biggl[ N \frac{1}{N^2}\biggr] = \frac{C_0}{N^2}.
\end{equation*}
Consequently, it follows that 
\begin{equation*}
    \frac{1}{K}\sum_{n=1}^N w_n^2 \sigma^2(t_n) =O\left(\frac{1}{KN^2}\right).
\end{equation*}
\end{proof}
Some intuition behind the $N$-dependent decay condition is given as follows. Specifically, for the log-likelihood to satisfy such a condition, one compatible specification is given by
\begin{equation*}
    \log \pi_\theta(y^i|y_t, q) = h(t) + \frac{t}{\sqrt{N}} \epsilon_t,~~\text{where}~~~~\mathbb{E}(\epsilon_t)=0,~~\text{Var}(\epsilon_t)\leq \sigma_0~\text{for some $\sigma_0>0$}; \quad \forall t;
\end{equation*}
$h(t)$ is a smooth Lipschitz function. Under this specification, the fluctuation of the likelihood is small for small $t$ (little masking) and increases for large $t$.

\subsection{The bias term}

Next, we analyze the bias term induced by approximating the integral $\int_0^1 g(t) \mathrm{d}t$ via numerical integration schemes.

The next lemma provides the rate of the bias for the case where some quadrature rule is used, under certain assumptions of $g(t)$. 
\begin{lemma}
Suppose that $g(t): (0,1]\rightarrow \mathbb{R}$ is a deterministic and continuously twice differentiable function (i.e., in $\mathcal{C}^2$). Further, suppose we use some quadrature rule (midpoint, trapezoidal, Simpson, etc.) to approximate its integral at points $\{t_n\}_{n=1}^N \in [0,1]$; i.e.,
\begin{equation}
\int_0^1 g(t) dt \approx \sum_{n=1}^N w_n g(t_n),
\end{equation}
where weights $w_n$'s are chosen according to the specific quadrature rule in use. Then, such an approximation scheme introduces deterministic integration bias, given by
\[
E_N(g)= \mid \int_0^1 g(t) \mathrm{d}t - \sum_{n=1}^N w_n g(t_n) \mid,
\]
and it scales as follows:
\begin{itemize}
\item for midpoint or trapezoidal rule, as $O(\frac{1}{N^2})$; 
\item for Simpson, as $\mathcal{O}(\frac{1}{N^4})$. 
\end{itemize}
\end{lemma}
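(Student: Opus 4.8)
The plan is to reduce the claim to the classical error analysis of composite Newton--Cotes rules; the only genuine twist is that $g$ is a priori defined only on $(0,1]$ because of the $1/t$ prefactor in $Z_t$, so the bookkeeping has to be preceded by an argument that $g$ extends, with the required smoothness, to the closed interval $[0,1]$. For that first step I would use condition C1' (equivalently, the binomial masking model in the remark after Proposition~\ref{prop:generic-rate-variance}), under which $\mathbb{E}_{y_t}(c_t)=O(t)$ as $t\to 0$, so the $1/t$ factor is exactly cancelled and $g(t)=\mathbb{E}_{y_t}(Z_t)$ stays bounded (indeed continuous) at the origin; moreover, the $\mathrm{KL}$-form computation carried out in Appendix~\ref{sec:loss-properties} shows the relevant integrand is in fact $\mathcal{C}^\infty$ on $[0,1]$. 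Hence we may treat $g\in\mathcal{C}^2([0,1])$ for the midpoint and trapezoidal statements and $g\in\mathcal{C}^4([0,1])$ for the Simpson statement, and fix $M_2:=\|g''\|_{\infty,[0,1]}<\infty$ and $M_4:=\|g^{(4)}\|_{\infty,[0,1]}<\infty$.

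Next I would invoke the elementary single-panel error formulas, which follow from Taylor expansion (or, cleanly, from the Peano-kernel representation of the polynomial-interpolation error). Partitioning $[0,1]$ into subintervals of common width $h\asymp 1/N$: the midpoint rule on a panel $[a,a+h]$ has error $\tfrac{h^3}{24}g''(\xi)$; the trapezoidal rule has error $-\tfrac{h^3}{12}g''(\xi)$; and Simpson's rule on a double-panel $[a,a+2h]$ has error $-\tfrac{h^5}{90}g^{(4)}(\xi)$, for suitable interior points $\xi$. Summing the local errors over the $\Theta(N)$ panels (respectively $\Theta(N/2)$ double-panels), bounding $|g''|\le M_2$ (resp. $|g^{(4)}|\le M_4$), and using $Nh=\Theta(1)$, gives $E_N(g)\le \tfrac{M_2}{24}h^2=O(1/N^2)$ for the midpoint rule, $E_N(g)\le \tfrac{M_2}{12}h^2=O(1/N^2)$ for the trapezoidal rule, and $E_N(g)\le \tfrac{M_4}{180}h^4=O(1/N^4)$ for Simpson. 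The stated rates follow, and the harmless discrepancies between counting "quadrature points'' and "subintervals'' change only the implicit constants, not the exponents.

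The main obstacle is not the quadrature estimates, which are textbook, but two smoothness issues at the boundary $t=0$: first, the naive integrand $Z_t$ itself diverges like $1/t$, so one must genuinely establish that the expected number of masked tokens decays fast enough to render $g$ bounded and $\mathcal{C}^2$ up to the endpoint; and second, the $O(1/N^4)$ Simpson rate requires $\mathcal{C}^4$ regularity — with only $\mathcal{C}^2$, composite Simpson generically stalls at $O(1/N^2)$, no better than trapezoidal. Both points are settled by appealing to Appendix~\ref{sec:loss-properties}: the $\mathrm{KL}$-form rewriting shows the integrand is $\mathcal{C}^\infty$ on $[0,1]$, which simultaneously supplies the cancellation of the $1/t$ singularity and the $\mathcal{C}^4$ (indeed arbitrary) smoothness needed for the higher-order rule. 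I would therefore phrase the hypothesis as "sufficient smoothness'' — $\mathcal{C}^2$ for midpoint/trapezoidal, $\mathcal{C}^4$ for Simpson — and cite that appendix to certify that this holds for the ELBO integrand of interest, after which the composite-rule error bounds above close the argument.
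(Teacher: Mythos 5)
Your argument is correct and coincides with the paper's, which simply cites the standard composite Newton--Cotes error bounds (e.g.\ Dahlquist--Bj\"orck, Atkinson) that you have written out via per-panel Taylor/Peano-kernel estimates and summation. You also correctly flag two points the lemma's statement glosses over: the hypothesis $g\in\mathcal{C}^2$ is not enough for the Simpson $O(1/N^4)$ clause (that rate needs $g\in\mathcal{C}^4$; with only $\mathcal{C}^2$, composite Simpson is not guaranteed to beat $O(1/N^2)$), and the integrand is a priori singular at $t=0$ because of the $1/t$ prefactor, so extending $g$ and the required derivatives to the closed interval $[0,1]$ must be justified before the classical bounds apply. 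Both issues are resolved exactly as you say by Appendix~\ref{sec:loss-properties}, where the $\KL$ form is shown to be a Bernstein polynomial in $t$ and hence $\mathcal{C}^\infty$ on $[0,1]$; restating the Simpson clause under a $\mathcal{C}^4$ hypothesis and pointing to that appendix would make the lemma airtight.
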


\begin{proof}
These are standard results from numerical analysis; see, e.g., \citet{dahlquist2008numerical, atkinson2008introduction}. 
\end{proof}

The above lemma directly yields the following result: for the integration bias$^2$ that appears in the decomposition of~\eqref{eqn:bias-variance}, it follows that for midpoint or trapezoidal rule, 
\begin{equation*}
    \left(\sum_{n=1}^{N}w_n g(t_n) - \int_0^1 g(t) dt\right)^2 = O(\frac{1}{N^4}); 
\end{equation*}
and for Simpson, the rate is given by $O(\frac{1}{N^8})$.

\begin{remark}\label{rmk:bias}
The above rates suggest that by applying a quadrature rule---provided that $g(t)$ satisfies certain properties---the integration bias$^2$ has become practically negligible, relative to the variance term that dominates. However, if one uses a generic integration scheme instead, such as the Riemann sum, the integration bias would scale at the rate of $O(\frac{1}{N})$ and thus the bias$^2$ at $O(\frac{1}{N^2})$. 
\end{remark}

As it can be seen from Remark~\ref{rmk:bias}, the ``gain" by considering a Semi-Deterministic Monte Carlo scheme is partly built on the fact that the integration bias$^2$ can be practically negligible, which hinges on whether one can use quadrature rules for integration. However, to apply such rules, $g(t)$ needs to be continuously twice differentiable, and its properties are further studied in Appendix~\ref{sec:loss-properties}.

\section{Properties of the integrand in the numerical quadrature}\label{sec:loss-properties}
For this section, we will denote $p_0$ a probability distribution on $[M]^D$ and denote $p_t$ the $t$ marginal of the forward process under masked diffusion and $\pi_t$ the learned marginal. Our goal is to state some conditions under which the loss function would be well-suited for numerical quadratures. We start by noting that from Proposition 1 of \cite{benton2024denoising} we can write the ELBO in the following form:
\begin{align*}
\mathcal{L}_{\text{ELBO}}(y|q) &= \E_{t\sim \mathcal{U}[0,1]} \E_{y_t \sim \pi(\cdot|y)} \left[ \frac{1}{t} \sum_{i = 1}^L \mathbf{1}[y_t^i = M] \log \pi_\theta(y^i|y_t, q)\right] \\
&= \int_0^1 \E\left[\Phi\left(\frac{p_t(y)}{\pi_t(y_t)}\right)\right]  \dee t+ C\\
&= \int_0^1 \frac{\dee }{\dee t} \KL(p_t ||\pi_t) \dee t +C,
\end{align*}
where $\Phi$ is the score matching operator introduced in \cite{benton2024denoising}. This calculation allows to realize that to study the integrand it is enough to study $\KL(p_t ||\pi_t) =: f(t)$. Note that with this definition, the integrand $g(t)$ defined in the previous section satisfies $g(t) = f'(t) +C$ for some constant $C$. By Proposition (1.1) in \cite{benton2024denoising} we know that $\Phi \geq 0$, therefore $f'(t) \geq 0$. 

\textbf{Notation:} We first introduce the following notation: let $q^{\mathcal{S}}$ denotes the probability distribution $q$ marginalized over the entries not in $\mathcal{S}$. The following lemma states that we can write $f(t)$ as a polynomial with coefficients relating to the average $\KL$ under different levels of masking.

\begin{lemma}\label{eq:loss-is-smooth}
The $\KL$ divergence for two distributions over time evolving according to masked diffusion is $\mathcal{C}^\infty$ as a function of $t$ and we can write:
\begin{align*}
    \KL(p_t ||\pi_t) 
&= \sum\limits_{k=1}^D 
    \sum\limits_{\substack{\mathcal{S} \subset [D]\\ |\mathcal{S}| = k}} 
    t^k(1-t)^{D-k} \KL(p_0^{\mathcal{S}}||\pi_0^{\mathcal{S}}) \\
&= \sum\limits_{k=1}^D    c_k \binom{D}{k}t^k(1-t)^{D-k} \\
\end{align*}
where:
\begin{align*}
    c_k = \frac{1}{\binom{D}{k}} \sum\limits_{\substack{\mathcal{S} \subset [D]\\ |\mathcal{S}| = k}}  \KL(p_0^{\mathcal{S}}||\pi_0^{\mathcal{S}})
\end{align*}
is the average $\KL$ divergence over sets of size $k$.
\end{lemma}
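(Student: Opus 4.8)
The plan is to compute the $t$-marginal $p_t$ of the forward masked-diffusion process explicitly and observe that the event structure factorizes over coordinates, so that the resulting KL splits over subsets of coordinates according to which tokens have been masked. First I would recall that under masked diffusion each coordinate $y^i$ is independently replaced by the mask token $M$ with probability $t$ and left intact with probability $1-t$. Hence, fixing a subset $\mathcal{S} \subset [D]$ of coordinates that are \emph{not} masked, the probability that exactly the coordinates in $\mathcal{S}$ survive is $t^{D-|\mathcal{S}|}(1-t)^{|\mathcal{S}|}$; conditioned on this masking pattern, the surviving coordinates carry the clean marginal $p_0^{\mathcal{S}}$, while the masked coordinates are deterministically $M$ and carry no information. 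The same decomposition holds for $\pi_t$ with $\pi_0^{\mathcal{S}}$ in place of $p_0^{\mathcal{S}}$, since the forward masking is shared. (A mild re-indexing is needed to match the statement's convention, which sums over subsets $\mathcal{S}$ of size $k$ weighted by $t^k(1-t)^{D-k}$; I would make the bookkeeping explicit about whether $\mathcal{S}$ indexes masked or surviving coordinates, then state it consistently.)

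Next I would use the fact that for a mixture indexed by a common discrete latent (here the masking pattern, which is shared between $p_t$ and $\pi_t$), the KL divergence decomposes as the expectation over the latent of the conditional KL divergences: $\KL(p_t\|\pi_t) = \sum_{\text{patterns}} \Pr[\text{pattern}]\,\KL(p_0^{\mathcal{S}}\|\pi_0^{\mathcal{S}})$, because the masked coordinates contribute zero (they are point masses at $M$ under both measures) and the latent-selection distribution is identical under both. This yields
\[
\KL(p_t\|\pi_t) = \sum_{k=1}^D \sum_{\substack{\mathcal{S}\subset[D]\\ |\mathcal{S}|=k}} t^k(1-t)^{D-k}\,\KL(p_0^{\mathcal{S}}\|\pi_0^{\mathcal{S}}),
\]
and grouping terms of equal $|\mathcal{S}|$ and dividing/multiplying by $\binom{D}{k}$ gives the stated form with $c_k$ the average KL over size-$k$ subsets. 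The $k=0$ term vanishes since $p_0^{\emptyset} = \pi_0^{\emptyset}$ is the trivial distribution on the empty tuple, so the sum starts at $k=1$. Finally, $\mathcal{C}^\infty$-smoothness in $t$ is immediate: each summand $t^k(1-t)^{D-k}$ is a polynomial, hence $f(t)$ is a polynomial of degree at most $D$ in $t$, which is trivially in $\mathcal{C}^\infty$ (indeed real-analytic).

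The main obstacle is the first step — justifying the exact mixture form of $p_t$ and, crucially, that $\KL$ decomposes additively with \emph{zero} contribution from masked coordinates. The subtlety is that a masked coordinate is not merely "noise" but literally a deterministic symbol $M$ under both $p_t$ and $\pi_t$, so the conditional law on masked coordinates is the same point mass and contributes nothing to the KL; one must also be careful that the learned reverse/forward marginal $\pi_t$ is genuinely obtained by applying the \emph{same} forward masking kernel to $\pi_0$ (this is what makes the masking-pattern latent shared and lets the conditioning argument go through). Once that modeling assumption is made explicit, the chain-rule / data-processing style identity for KL under a common latent is routine, and the rest is elementary polynomial algebra. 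I would therefore spend most of the write-up carefully setting up the shared-latent decomposition and the vanishing of masked-coordinate terms, and treat the combinatorial regrouping and the smoothness conclusion as short remarks.
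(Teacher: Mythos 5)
Your proof is correct and takes essentially the same route as the paper's: the paper directly splits the KL sum $\sum_x p_t(x)\log\frac{p_t(x)}{\pi_t(x)}$ according to the masking pattern of $x$, substitutes the explicit factorization $p_t(x)=t^{k}(1-t)^{D-k}p_0^{\cdot}(x)$ so that the binomial prefactor cancels inside the logarithm, and reads off that the result is a degree-$D$ polynomial in $t$; your phrasing via the shared-latent / chain-rule identity for KL is a repackaging of the same computation, which is valid here precisely because the masking pattern is a deterministic function of $y_t$ (so partitioning the state space and conditioning on the latent are the same operation). The only point to make explicit — which you already flag — is the orientation of $\mathcal{S}$ (masked vs.\ surviving indices), which determines whether the weight is $t^{k}(1-t)^{D-k}$ or $t^{D-k}(1-t)^{k}$ and which marginal of $p_0$ carries the information.
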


\begin{proof}
The proof relies on fundamental combinatorial facts and the definition of the forward process:
\begin{align*}
    \KL(p_t ||\pi_t) &= \sum_{x\in[M]^D} p_t(x) \log\left(\frac{p_t(x)}{\pi_t(x)}\right) \\
&= \sum\limits_{k=1}^D 
    \sum\limits_{\substack{S \subset [D]\\ |\mathcal{S}| = k}} 
    \sum\limits_{\substack{x : x_i = M \\ \iff i \in S}} 
    p_t(x) \log\!\left(\frac{p_t(x)}{\pi_t(x)}\right) \\
&= \sum\limits_{k=1}^D 
    \sum\limits_{\substack{\mathcal{S} \subset [D]\\ |\mathcal{S}| = k}} 
    \sum\limits_{\substack{x : x_i = M \\ \iff i \in S}} 
    t^k(1-t)^{D-k} p_0^{\mathcal{S}}(x) \log\!\left(\frac{ t^k(1-t)^{D-k} p_0^{\mathcal{S}}(x)}{ t^k(1-t)^{D-k} \pi_0^{\mathcal{S}}(x)}\right) \\
&= \sum\limits_{k=1}^D 
    \sum\limits_{\substack{\mathcal{S} \subset [D]\\ |\mathcal{S}| = k}} 
    t^k(1-t)^{D-k} \sum\limits_{\substack{x : x_i = M \\ \iff i \in S}} 
     p_0^{\mathcal{S}}(x) \log\!\left(\frac{p_0^{\mathcal{S}}(x)}{ \pi_0^{\mathcal{S}}(x)}\right) \\\
&= \sum\limits_{k=1}^D 
    \sum\limits_{\substack{\mathcal{S} \subset [D]\\ |\mathcal{S}| = k}} 
    t^k(1-t)^{D-k} \KL(p_0^{\mathcal{S}}||\pi_0^{\mathcal{S}}) \\
\end{align*}
 Using this, we can rewrite the $\KL$ as:
\begin{align*}
    \KL(p_t ||\pi_t) 
&= \sum\limits_{k=1}^D 
    \sum\limits_{\substack{\mathcal{S} \subset [D]\\ |\mathcal{S}| = k}} 
    t^k(1-t)^{D-k} \KL(p_0^{\mathcal{S}}||\pi_0^{\mathcal{S}}) \\
&= \sum\limits_{k=1}^D    c_k \binom{D}{k}t^k(1-t)^{D-k} \\
\end{align*}
which gives the result.
\end{proof}

Given this lemma, it becomes clear that to understand the properties of the integrand, it is enough to understand the following polynomial:
\begin{align*}
f(t) &= \sum\limits_{k=1}^D    c_k \binom{D}{k}t^k(1-t)^{D-k} \\
&= \sum\limits_{k=1}^D    c_k B_{k,D}(t)
\end{align*}
This is a Bernstein polynomial, and its properties have been studied before.

\paragraph{Forward differences and Bernstein derivatives.} When looking at the derivatives of Bernestein polynomials, the differences of the coefficients appear; to this end denote:
\begin{align*}
\Delta c_k &= c_{k+1}-c_k\\
\Delta^2 c_k &= c_{k+2}-2c_{k+1}+c_k = (c_{k+2} - c_{k+1}) - (c_{k+1} - c_k) = \Delta_{k+1}-\Delta_{k}\\
\Delta^3 c_k &= c_{k+3}-3c_{k+2}+3c_{k+1}-c_k = \Delta_{k+1}^2 - \Delta_{k}^2  
\end{align*}
Then it is easy to check that: 
\begin{lemma}[Derivative formulas]
\label{lem:bernstein-derivs}
For all $t\in[0,1]$,
\begin{align}
f'(t)     &= D \sum_{k=0}^{D-1} \Delta c_k\, B_{k,D-1}(t), \label{eq:fprime}\\
f''(t)    &= D(D-1) \sum_{k=0}^{D-2} \Delta^2 c_k\, B_{k,D-2}(t), \label{eq:f2}\\
f^{(3)}(t)&= D(D-1)(D-2) \sum_{k=0}^{D-3} \Delta^3 c_k\, B_{k,D-3}(t). \label{eq:f3}
\end{align}
\end{lemma}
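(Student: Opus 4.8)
The plan is to reduce all three identities to a single differentiation rule for the Bernstein basis and then iterate. The one computation I would do by hand is the elementary identity
\[
\frac{d}{dt} B_{k,D}(t) = D\bigl(B_{k-1,D-1}(t) - B_{k,D-1}(t)\bigr), \qquad 0\le k\le D,
\]
adopting the conventions $B_{-1,D-1}\equiv 0$ and $B_{D,D-1}\equiv 0$. This follows from the product rule applied to $\binom{D}{k}t^k(1-t)^{D-k}$ together with the binomial identities $k\binom{D}{k}=D\binom{D-1}{k-1}$ and $(D-k)\binom{D}{k}=D\binom{D-1}{k}$; the boundary cases $k=0$ and $k=D$ are covered automatically by the vanishing conventions, since the terms $t^{k-1}$ and $(1-t)^{D-1-k}$ that would be problematic carry coefficient $0$.

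Next I would differentiate $f(t)=\sum_{k=0}^D c_k B_{k,D}(t)$ termwise, substitute the identity above, and split into the two sums $D\sum_{k=0}^D c_k B_{k-1,D-1}(t)$ and $-D\sum_{k=0}^D c_k B_{k,D-1}(t)$. Shifting the index $j=k-1$ in the first sum and discarding the out-of-range terms $B_{-1,D-1}$ and $B_{D,D-1}$ (both zero), each sum runs over $0\le j\le D-1$ against the common basis $\{B_{j,D-1}\}$; collecting coefficients gives precisely $f'(t)=D\sum_{j=0}^{D-1}\Delta c_j\, B_{j,D-1}(t)$, which is \eqref{eq:fprime}. The only point in this step that requires a bit of care is the bookkeeping of those boundary terms — checking that after the shift the two sums genuinely align over the same index range without spurious contributions.

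Finally, the formulas for $f''$ and $f^{(3)}$ follow by iteration: the right-hand side of \eqref{eq:fprime} is itself a Bernstein expansion of degree $D-1$ with coefficient sequence $(\Delta c_j)_{j}$, so applying the same computation again yields $f''(t)=D(D-1)\sum_{k=0}^{D-2}\Delta^2 c_k\, B_{k,D-2}(t)$, and once more $f^{(3)}(t)=D(D-1)(D-2)\sum_{k=0}^{D-3}\Delta^3 c_k\, B_{k,D-3}(t)$. Equivalently, one can state and prove the general formula $f^{(m)}(t)=\frac{D!}{(D-m)!}\sum_{k=0}^{D-m}\Delta^m c_k\, B_{k,D-m}(t)$ by induction on $m$, with the degree-one identity as the inductive step. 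There is no genuine obstacle here — this is a classical fact about Bernstein polynomials (see, e.g., standard references) — and the proof is essentially bookkeeping around the single derivative identity stated above.
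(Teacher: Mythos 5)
Your proof is correct and is the standard derivation via the Bernstein basis identity $\frac{d}{dt}B_{k,D}(t)=D\bigl(B_{k-1,D-1}(t)-B_{k,D-1}(t)\bigr)$ followed by index shifting and iteration; the paper itself gives no proof, introducing the lemma as a classical fact with the phrase ``it is easy to check that.'' Note that the paper's definition sums $f(t)=\sum_{k=1}^D c_k B_{k,D}(t)$ from $k=1$ while you write $k=0$, but this is immaterial because $c_0$, the average $\KL$ over the empty index set, vanishes.
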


Because $B_{k,n}(t)\ge 0$ on $[0,1]$, the signs of $f^{(r)}(t)$ are controlled by the signs of the averaged forward differences $\Delta^r c_k$.

\paragraph{One-step representation via KL chain rule.}
Let $F(\mathcal{S}) = \KL(p_0^{\mathcal{S}}||\pi_0^{\mathcal{S}})$, we are now able to obtain simple, intuitive explanations of the meaning of each coefficient $\Delta_k^i$. For $i\notin \mathcal{S}$, define the (nonnegative) one-step gain
\[
\Delta_i(\mathcal{S})\;:=\; F(\mathcal{S}\cup\{i\})-F(\mathcal{S})
=\mathbb{E}_{X_{\mathcal{S}}\sim P_0^{(\mathcal{S})}}\!\left[
\KL\!\big(P_0(X_i\!\mid X_{\mathcal{S}})\,\|\,Q_0(X_i\!\mid X_{\mathcal{S}})\big)\right]\ge 0.
\]
The above quantity represents the average $\KL$ that we obtain by masking an extra token to a given set. Intuitively we expect $\Delta_k$ to be increasing, as masking more tokens reduces the amount of information that we have (This in fact, holds already without extra assumptions). Similarly, we can obtain the iterated differences: 
\begin{align}
\Delta c_k &= \mathbb{E}_{\substack{|\mathcal{S}|=k\\ i\notin \mathcal{S}}}\!\big[\Delta_i(\mathcal{S})\big], \label{eq:avg1}\\
\Delta^2 c_k &= \mathbb{E}_{\substack{|\mathcal{S}|=k\\ i,j\notin \mathcal{S}}}\!\big[\Delta_i(\mathcal{S}\cup\{j\})-\Delta_i(\mathcal{S})\big], \label{eq:avg2}\\
\Delta^3 c_k &= \mathbb{E}_{\substack{|\mathcal{S}|=k\\ i,j,\ell\notin \mathcal{S}}}\!
\big[\Delta_i(\mathcal{S}\cup\{j,\ell\}) - \Delta_i(\mathcal{S}\cup\{j\}) - \Delta_i(\mathcal{S}\cup\{\ell\}) + \Delta_i(\mathcal{S})\big].
\label{eq:avg3}
\end{align}
If such quantities are positive, this will immediately imply that the derivatives in \ref{lem:bernstein-derivs} will be positive, implying important consequences on the loss function. Intuitively, such coefficients being positive correspond to the idea that masking one token when $K$ are masked results in more loss of information than masking a token when $L<K$ are masked. We require these as assumptions and explain the sufficient conditions for them to hold in the following Proposition:
\begin{proposition}[Sufficient conditions for convexity and increasing integrand]
\label{prop:positive-derivatives}
For all $t\in[0,1]$:
\begin{enumerate}
\item (\emph{Integrand is positive}) For any $P_0,Q_0$, the set function $F$ is monotone:
$S\subseteq T \Rightarrow F(\mathcal{S})\le F(\mathcal{T})$ (data processing under marginalization).
Hence $\Delta_i(\mathcal{S})\ge 0$, $\Delta c_k\ge 0$, and by \eqref{eq:fprime} we have
\[
\boxed{~f'(t)=\frac{\dee}{\dee t}\KL(P_t\|Q_t)\ \ge\ 0~}.
\]
\item (\emph{Integrand is increasing in $t$}) If the \textbf{Increasing Conditional Divergence} (ICD) condition holds:
\[
\Delta_i(\mathcal{S})\ \le\ \Delta_i(\mathcal{T})\qquad \text{for all }\mathcal{S}\subseteq \mathcal{T}\subseteq[D]\setminus\{i\},
\]
then $\Delta^2 c_k\ge 0$ by \eqref{eq:avg2}, and \eqref{eq:f2} yields
\[
\boxed{~f''(t)=\frac{\dee^2}{\dee t^2}\KL(P_t\|Q_t)\ \ge\ 0~}.
\]
\item (\emph{Integrand has convexity in $t$}) If, in addition, the \textbf{second-order ICD} condition holds:
\[
\big(\Delta_i(\mathcal{S}\cup\{j\})-\Delta_i(\mathcal{S})\big)\ \le\
\big(\Delta_i(\mathcal{S}\cup\{j,\ell\})-\Delta_i(\mathcal{S}\cup\{\ell\})\big)
\]
for all distinct $i,j,\ell$ and $\mathcal{S}\subseteq[D]\setminus\{i,j,\ell\}$, then $\Delta^3 c_k\ge 0$ by \eqref{eq:avg3}, and \eqref{eq:f3} gives
\[
\boxed{~f^{(3)}(t)=\frac{\dee^3}{\dee t^3}\KL(P_t\|Q_t)\ \ge\ 0~}.
\]
\end{enumerate}
\end{proposition}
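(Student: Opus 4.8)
The plan is to reduce all three claims to sign conditions on the set function $F(\mathcal{S})=\KL(p_0^{\mathcal{S}}\|\pi_0^{\mathcal{S}})$, using two layers that are already in place in the excerpt. \textbf{Layer 1 (from $f$ to the coefficients):} by the Bernstein derivative identities of Lemma~\ref{lem:bernstein-derivs} and the nonnegativity of the Bernstein basis, $B_{k,n}(t)\ge 0$ on $[0,1]$, one gets $f'\ge 0$ whenever all $\Delta c_k\ge 0$, $f''\ge 0$ whenever all $\Delta^2 c_k\ge 0$, and $f^{(3)}\ge 0$ whenever all $\Delta^3 c_k\ge 0$. \textbf{Layer 2 (from the coefficients to $F$):} by the chain-rule averaging identities~\eqref{eq:avg1}--\eqref{eq:avg3}, each $\Delta^r c_k$ is a uniform average (over the relevant index tuples, e.g. $(i,j,\ell,\mathcal{S})$ when $r=3$) of an $(r{-}1)$-st order finite difference of the one-step gain $\mathcal{S}\mapsto\Delta_i(\mathcal{S})$. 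Hence, for each part, it suffices to show the corresponding finite difference of $\Delta_i$ is pointwise nonnegative, and then push nonnegativity through the average (Layer 2) and through the Bernstein sum (Layer 1).

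For part 1, I would prove $F$ is monotone under inclusion via the chain rule for KL divergence: for $\mathcal{S}\subseteq\mathcal{T}$,
\[
\KL(p_0^{\mathcal{T}}\|\pi_0^{\mathcal{T}})=\KL(p_0^{\mathcal{S}}\|\pi_0^{\mathcal{S}})+\mathbb{E}_{X_{\mathcal{S}}\sim p_0^{\mathcal{S}}}\!\left[\KL\!\big(p_0(X_{\mathcal{T}\setminus\mathcal{S}}\mid X_{\mathcal{S}})\,\big\|\,\pi_0(X_{\mathcal{T}\setminus\mathcal{S}}\mid X_{\mathcal{S}})\big)\right],
\]
whose second summand, being an expectation of KL divergences, is $\ge 0$; this is exactly data processing under marginalization. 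Specializing $\mathcal{T}=\mathcal{S}\cup\{i\}$ reproduces the displayed formula for $\Delta_i(\mathcal{S})$ and shows $\Delta_i(\mathcal{S})\ge 0$ for arbitrary $p_0,\pi_0$. Feeding this into~\eqref{eq:avg1} yields $\Delta c_k\ge 0$, and Layer 1 gives the boxed $f'(t)\ge 0$; no assumption beyond $p_0,\pi_0$ being probability distributions is used.

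Parts 2 and 3 then follow immediately, since each hypothesis is precisely the pointwise nonnegativity of the finite difference inside the average. Under the ICD condition, for $|\mathcal{S}|=k$ and distinct $i,j\notin\mathcal{S}$ we have $\mathcal{S}\subseteq \mathcal{S}\cup\{j\}\subseteq[D]\setminus\{i\}$, so $\Delta_i(\mathcal{S}\cup\{j\})-\Delta_i(\mathcal{S})\ge 0$; averaging via~\eqref{eq:avg2} gives $\Delta^2 c_k\ge 0$, hence the boxed $f''(t)\ge 0$. Under the second-order ICD condition, the bracketed quantity in~\eqref{eq:avg3} rewrites as $\big[\Delta_i(\mathcal{S}\cup\{j,\ell\})-\Delta_i(\mathcal{S}\cup\{\ell\})\big]-\big[\Delta_i(\mathcal{S}\cup\{j\})-\Delta_i(\mathcal{S})\big]$, which the hypothesis asserts is $\ge 0$; hence $\Delta^3 c_k\ge 0$ and the boxed $f^{(3)}(t)\ge 0$. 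Since the quadrature integrand $g$ obeys $g(t)=f'(t)+C$, it inherits monotonicity from $f''\ge 0$ and convexity from $f^{(3)}\ge 0$ verbatim, a constant shift affecting neither.

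The main obstacle, in my view, is not this final assembly—which is short once the two identity layers are granted—but justifying~\eqref{eq:avg1}--\eqref{eq:avg3} themselves. Starting from the Bernstein representation of Lemma~\ref{eq:loss-is-smooth} with $c_k=\binom{D}{k}^{-1}\sum_{|\mathcal{S}|=k}F(\mathcal{S})$, one forms $\Delta^r c_k=\sum_{j=0}^{r}(-1)^{r-j}\binom{r}{j}c_{k+j}$, replaces each nested difference $F(\mathcal{T})-F(\mathcal{S})$ by the chain-rule one-step gains, and then checks that the $\binom{D}{\cdot}$ normalizations and the counts of subsets realizing a given pattern collapse to a plain uniform average over tuples with the indicated finite-difference signs; this is elementary but bookkeeping-heavy. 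Lemma~\ref{lem:bernstein-derivs} I would simply cite as the standard Bernstein differentiation rule (telescoping $\tfrac{d}{dt}B_{k,n}(t)=n\,[B_{k-1,n-1}(t)-B_{k,n-1}(t)]$), and the nonnegativity of the Bernstein basis is immediate from its definition—so everything downstream of the averaging identities drops out.
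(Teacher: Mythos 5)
Your argument follows exactly the structure the paper intends: reduce each boxed inequality via Lemma~\ref{lem:bernstein-derivs} and nonnegativity of the Bernstein basis to a sign condition on $\Delta^r c_k$, then reduce via the averaging identities \eqref{eq:avg1}--\eqref{eq:avg3} to a pointwise sign condition on finite differences of $\Delta_i$, which in part 1 is data processing / the KL chain rule and in parts 2--3 is precisely the assumed ICD and second-order ICD hypotheses. This is the same route the proposition's own text traces ("Hence $\Delta_i(\mathcal{S})\ge 0$, $\Delta c_k\ge 0$, and by \eqref{eq:fprime}\dots"), so no substantive gap; you correctly flag that \eqref{eq:avg1}--\eqref{eq:avg3} are used without derivation in the paper as well.
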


\section{Empirical Details}

\subsection{Training Details}
We leverage the codebase from \cite{zhao2025d1} which in itself leverages the TRL library \citep{von2020trl}. For our hyperparameters we keep most of the default parameters from \cite{zhao2025d1} without any hyperparameter search. We leverage a Low-Rank Adaptation with rank $r = 128$ and scaling factor $\alpha = 64$. 

Across all runs we utilize the AdamW optimizer with parameters $\beta_1 = 0.9, \beta_2=0.99$ with a weight decay of $0.1$ and gradient clipping at $0.2$. We leverage flash attention and $4-$bit quantization. We found that GDPO usually requires a smaller learning rate than diffu-GRPO and otherwise it can result in diverging models. We save checkpoints every $300$ or $500$ iterations and stop our runs when the reward function has plateaued, for the countdown dataset we observe that the reward function continuous increasing for a very long time, while for the math dataset we observed high-performing checkpoints around $9000$ iterations, but report the best one, which happened later in training. We present the learning rate as well as other hyperparameters in Table \ref{tab:hyperparameters}. We present the reward function for the different datasets in Figure \ref{fig:reward-curves} and observe that GDPO can offer a steady growth before plateauing.

\begin{figure}[h]
    \centering
    \begin{subfigure}[t]{0.34\textwidth}
        \includegraphics[width=\linewidth]{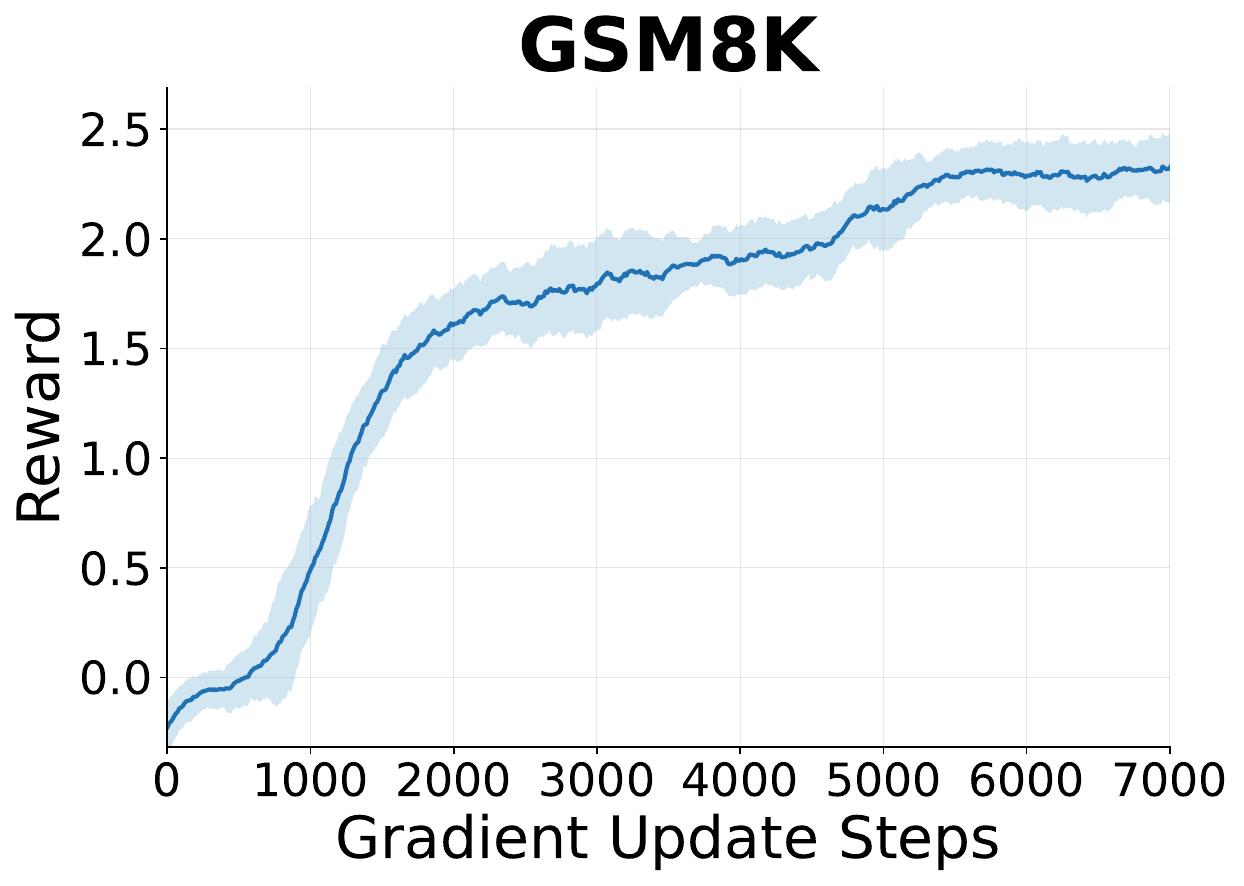}
    \end{subfigure}\hfill
    \begin{subfigure}[t]{0.34\textwidth}
        \includegraphics[width=\linewidth]{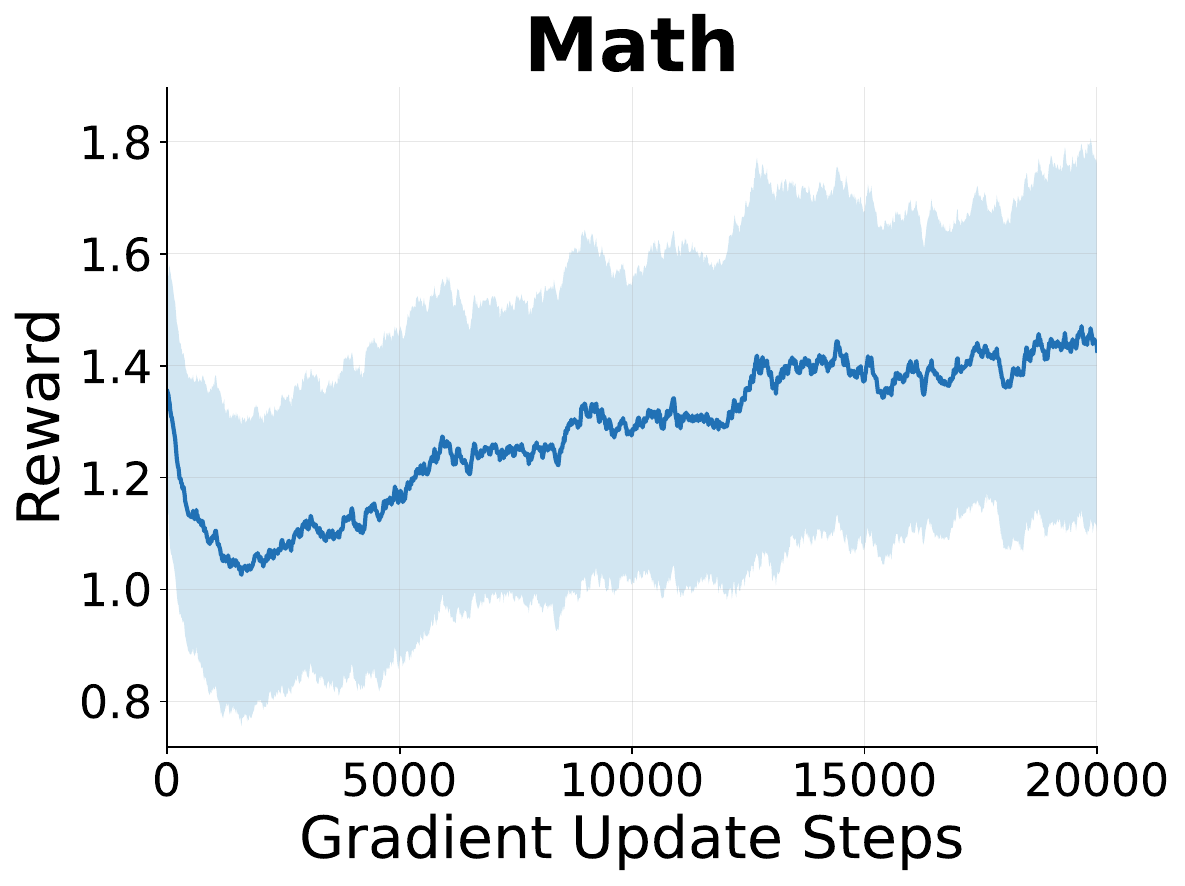}
    \end{subfigure}\hfill
    \begin{subfigure}[t]{0.32\textwidth}
        \includegraphics[width=\linewidth]{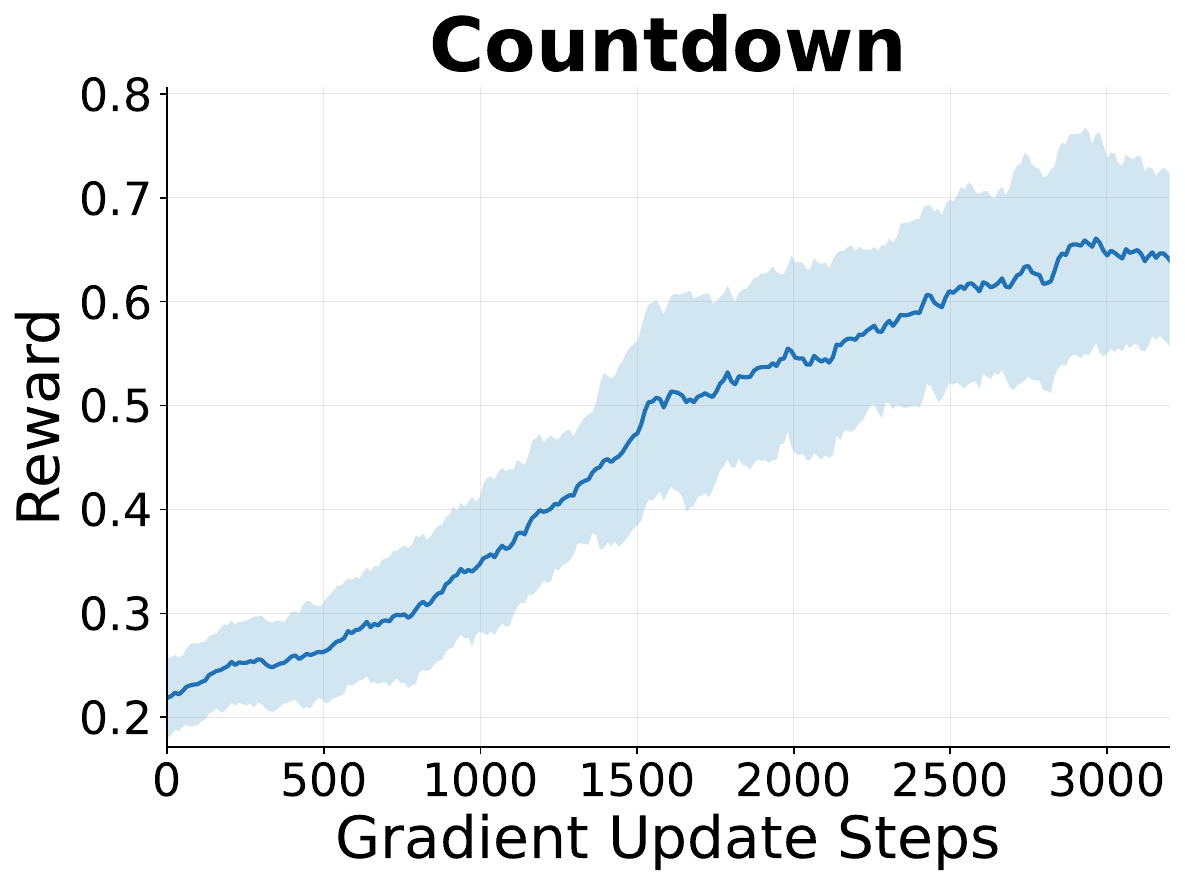}
    \end{subfigure}

    \par\smallskip 

    \makebox[\textwidth][c]{%
        \begin{subfigure}[t]{0.35\textwidth}
            \includegraphics[width=\linewidth]{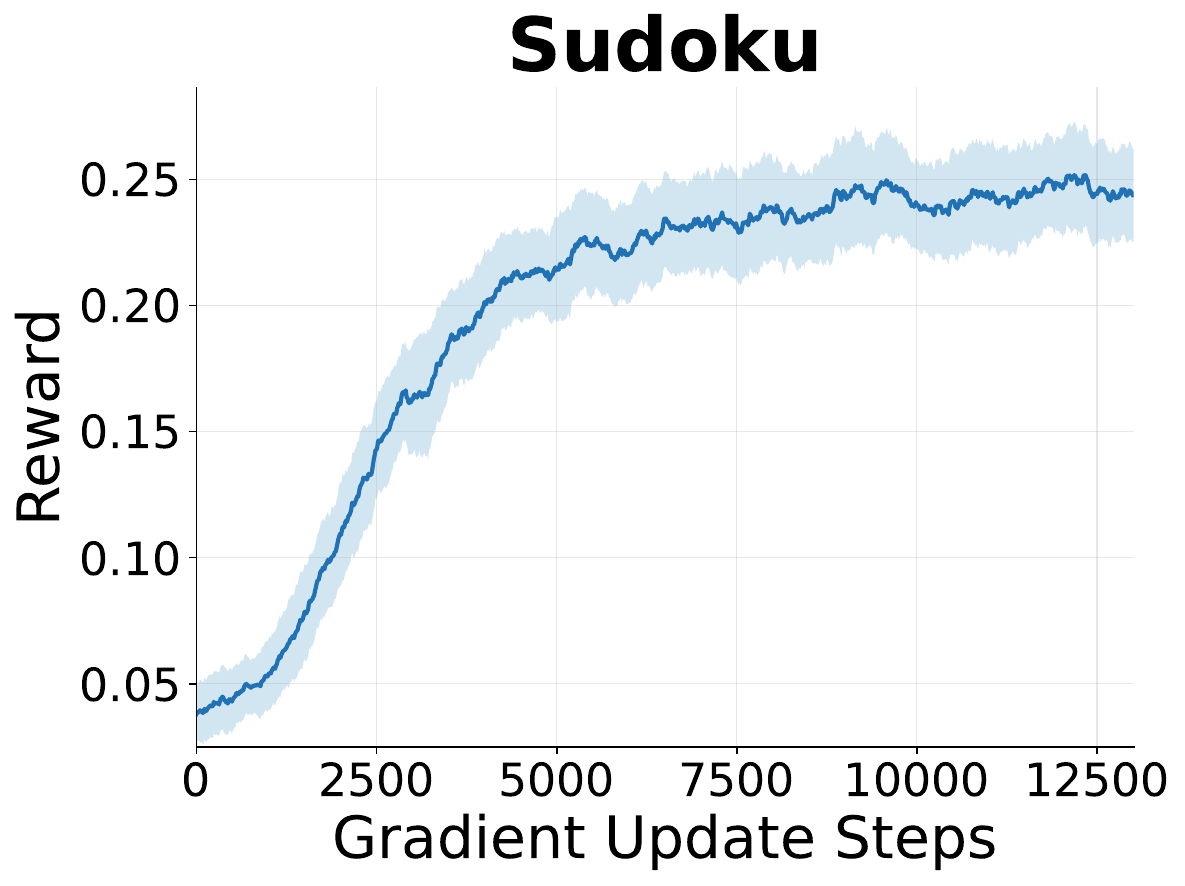}
        \end{subfigure}\hspace{0.04\textwidth}%
        \begin{subfigure}[t]{0.32\textwidth}
            \includegraphics[width=\linewidth]{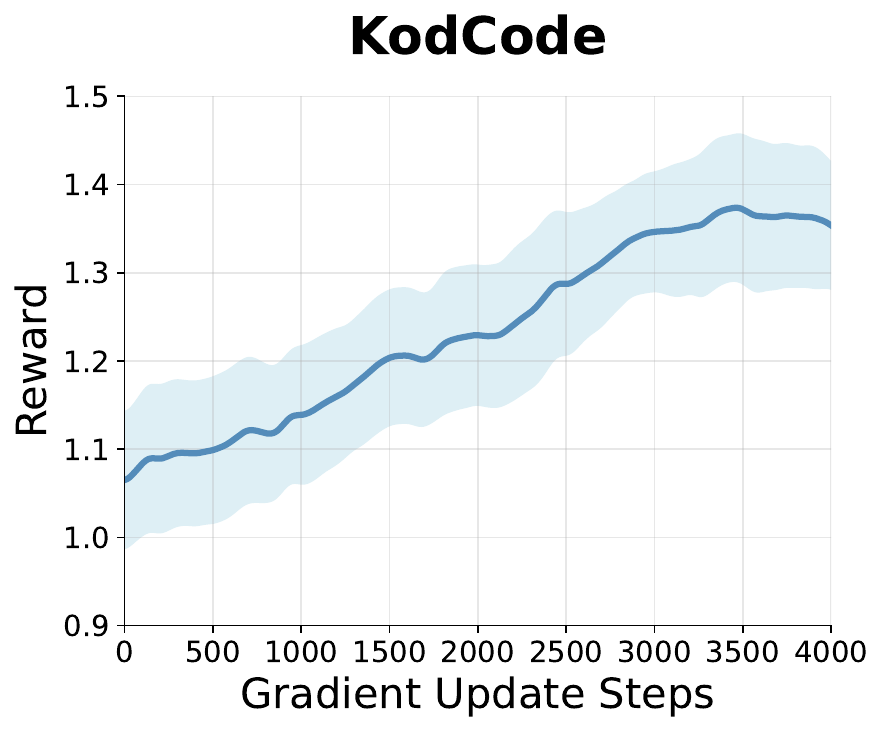}
        \end{subfigure}%
    }

    \caption{Reward curves during RL training for the models reported in Table \ref{tab:math-planning}.}
    \label{fig:reward-curves}
\end{figure}

\begin{table}[b]
\centering
\setlength{\tabcolsep}{9pt}
\renewcommand\arraystretch{1.}
\caption{\textbf{Hyperparameters for different training runs}}
\label{tab:hyperparameters}
\small
\begin{tabularx}{\linewidth}{>{\raggedright\arraybackslash}p{2cm} ccccc}
\toprule
\textbf{Model} & \textbf{GSM8K} & \textbf{MATH500} & \textbf{Countdown} & \textbf{Sudoku} & \textbf{KodCode-Light-RL} \\
\midrule
Learning Rate & {\color{black}$5e-7$} & $3e-7$ & $1e-6$ & {\color{black}$2e-7$}  & $1e-6$ \\
\midrule
Number of GPUs & {\color{black}8} & 4 & {\color{black}8} & {\color{black}8} & 8 \\
\midrule
Per-GPU-Batch Size & 6 & 6 & 6 &  6 &  6 \\
\midrule
Number of Quadrature Points & {\color{black}3} & {\color{black}3} & {\color{black}3} & {\color{black}3} & 3 \\
\midrule
Number of iterations of best checkpoint & {\color{black}6000} & {\color{black}17000} & {\color{black}3000} & {\color{black}11500} & 3800 \\
\bottomrule
\end{tabularx}
\end{table}

\subsection{Reward Functions}

To provide a self-contained exposition, we summarize the reward functions used in RL-based post-training. The design incorporates task-specific incentives that promote formatting fidelity, logical consistency, and solution correctness, with their structure tailored to the demands of each task. This formulation is largely consistent with prior work \citep{zhao2025d1}, and our exposition follows established conventions.

\textbf{GSM8K} The setup uses a composite reward function composed of several components, including 
\begin{itemize}
    \item \textbf{Correctness:} Reward for the parsed answer aligning with the ground truth (+2.0 points).
    \item \textbf{Integer Answer:} Reward for producing a valid integer as the parsed answer (+0.5).   
    \item \textbf{Soft Format:} Reward for responses in the pattern \texttt{<reasoning>...</reasoning>} \texttt{<answer>...</answer>} (+0.5 points).
    \item \textbf{Strict Format:} Reward for outputs matching the strict format and line breaks (+0.5 points)
    \item \textbf{XML Format:} Reward for correct placement of reasoning and answer tags, with +0.125 points per matched pair.  
\end{itemize}

\textbf{MATH500}

\begin{itemize}
    \item \textbf{Correctness}: Reward for the parsed answer given in \texttt{\textbackslash boxed\{\}} and consistent with the ground truth (+2.0). 
    \item \textbf{Soft Format}: Reward for the use of \texttt{<answer>} tags and \texttt{\textbackslash boxed}, assigned as:  
    \begin{itemize}
        \item +1.00 point when answer tags are included and enclose \texttt{\textbackslash boxed}.  
        \item +0.75 point when answer tags are included but without \texttt{\textbackslash boxed}.  
        \item +0.50 point when answer tags are omitted yet \texttt{\textbackslash boxed} is provided.  
        \item +0.25 point when both answer tags and \texttt{\textbackslash boxed} are missing.    
    \end{itemize}  
    
\end{itemize}

\textbf{Countdown}

\begin{itemize}
    \item \textbf{Soft Correctness:} Reward for equations relative to the target value: 
    \begin{itemize}
        \item +1.0 point when the equation reaches the target using only the provided numbers.
        \item +0.1 point when the equation does not reach the target but uses the correct numbers.
        \item +0 point when both criteria fail.
    \end{itemize}
\end{itemize}

\textbf{Sudoku}

\begin{itemize}
    \item \textbf{Cell Accuracy:} Reward for the fraction of originally empty cells filled correctly.
\end{itemize}

\textbf{Coding}  \citep{gong2025diffucoder}

\begin{itemize}
    \item \textbf{Format:} Reward for generated code adhering to the required Python code-block structure:  
    \begin{itemize}
        \item 1.0 point when the code is enclosed in a valid Python code block  \texttt{```python ```}, and the snippet parses without syntax errors.
        \item 0.5 point when the output is well-formatted but contains Python syntax errors.
        \item 0.0 point when the format is invalid or no code block is detected.
    \end{itemize}
    
    \item \textbf{Code Execution:} Reward for generated code that passes unit tests:  
    \begin{itemize}
        \item Score equals the fraction of test cases passed by executing the code.  
        \item 0.0 point when the code fails formatting or does not run successfully.  
    \end{itemize}
\end{itemize}

{\color{black}
\subsection{Ablation Studies}

\paragraph{Computational Cost Analysis}
We present an analysis on the additional computational budget introduced by using multiple NFEs per training iteration. To do so, we limit the total training time to a fixed budget of $6$ hours. We study this on the countdown dataset and use d1 \citep{zhao2025d1} as our baseline. We use a per-device batch size of $6$ and train $3$ models using a different number of NFEs per training step. We don't evaluate the test accuracy for this experiment as different methods may require different hyperparameters to achieve optimal performance. Our main goal is to highlight the additional computational overhead introduced by using GDPO with multiple quadrature points.

We present the results of this analysis in Table \ref{tab:training-time}. As expected, increasing the number of network evaluations results in less total training iterations. However, even in the most expensive configuration $N=3$ we still achieve $77\%$ of the iterations while obtaining the added benefits of GDPO and better likelihood approximation.

\paragraph{Effect of Quadrature Points} We further conduct an ablation study in Table~\ref{tab:diff_quadratures} to analyze the impact of varying the number of quadrature points when approximating the ground-truth likelihood. Increasing the number of quadrature points consistently improves approximation quality and training stability. Using $N=2$ already captures most of the gains relative to the ground truth, while $N=3$ achieves the best overall performance, establishing state-of-the-art results among the evaluated configurations.

\begin{table}[t]
\centering
\setlength{\tabcolsep}{6pt}
\renewcommand{\arraystretch}{1.1}
\caption{\textbf{Comparison of training iterations across different configurations}}
\label{tab:training-time}
\small
\begin{tabularx}{\linewidth}{
  >{\raggedright\arraybackslash}p{2.8cm}
  >{\centering\arraybackslash}X
  >{\centering\arraybackslash}X
  >{\centering\arraybackslash}X
}
\toprule
\textbf{Method} 
& \textbf{Time (Hours)} 
& \textbf{Number of Iterations} 
& \textbf{NFEs per Step} \\
\midrule
diffu-GRPO & 6 & 4500 & 1 \\
GDPO ($N{=}2$) & 6 & 3800 & 2 \\
GDPO ($N{=}3$) & 6 & 3500 & 3 \\
\bottomrule
\end{tabularx}
\end{table}

\begin{table}[t]
\centering
\setlength{\tabcolsep}{6pt}
\renewcommand{\arraystretch}{1.15}
\caption{\textbf{Comparison of model performance across different configurations.}}
\label{tab:diff_quadratures}
\small
\begin{tabularx}{\linewidth}{
  >{\raggedright\arraybackslash}p{2.6cm}
  >{\centering\arraybackslash}p{0.9cm}
  *{6}{>{\centering\arraybackslash}X}
}
\toprule
\multirow{2}{*}{\textbf{Model}} 
& \multirow{2}{*}{\textbf{NFE}}
& \multicolumn{3}{c}{\textbf{GSM8K}} 
& \multicolumn{3}{c}{\textbf{Countdown}} \\
\cmidrule(lr){3-5} \cmidrule(lr){6-8}
& & 128 & 256 & 512
& 128 & 256 & 512 \\
\midrule
LLaDA-8B-Instruct
& 1
& 68.7 & 76.7 & 78.2
& 36.2 & 41.2 & 40.4 \\
\midrule
\quad + \textit{diffu}-GRPO
& 1
& 73.2 & 81.1 & 82.1
& 34.8 & 32.0 & 42.2 \\
\midrule
\quad + GDPO ($N{=}2$)
& 2
& \best{79.08} & \best{82.94} & 84.61
& 48.83 & 67.19 & 77.34 \\
\midrule
\quad + GDPO ($N{=}3$)
& 3
&  78.39 & {82.79} & \best{84.99}
& \best{64.45} & \best{74.22} & \best{80.86} \\
\bottomrule
\end{tabularx}
\end{table}

}

{{\color{black}\section{ Comparison with Diffucoder}}}
\textcolor{black}{One method that appears similar to ours is }\citet{gong2025diffucoder} \textcolor{black}{which introduces coupled-GRPO an extension of diffu-GRPO }\citep{zhao2025d1} \textcolor{black}{that leverages multiple network evaluations to improve the per-token likelihood evaluations. Diffu-GRPO had originally used the following approximation to the token-level likelihoods given by:}
$$\log p_\theta(y^i|q \oplus M \oplus \dots \oplus M).$$
\textcolor{black}{Coupled-GRPO improves on this by using complementary timestep pairs $(t,\hat{t})$ such that $t  + \hat{t} = T$ and complementary masks. Each mask is disjoint and together they cover the entire sequence. Given the two sequences $y_t, y_{\hat{t}}$ one can approximate the \textbf{token-level likelihoods} as:}
\[
\log \pi_\theta(y^i|q) \approx 
\begin{cases}
\log p_\theta(y^i|q \oplus y_t) &\text{if} \quad y_t^i = y^i  \\
\log p_\theta(y^i|q \oplus y_{\hat{t}}) &\text{if} \quad y_{\hat{t}}^i = y^i
\end{cases}
\]
\textcolor{black}{This method has the virtue of evaluating each token under a more realistic setting than the fully masked set up of \citet{zhao2025d1}. Furthermore, \citet{gong2025diffucoder} showed that it results in variance reduction while remaining computationally efficient.}

\begin{wraptable}{r}{0.62\linewidth}
\vspace{-10pt}
\centering
\setlength{\tabcolsep}{4.5pt}
\renewcommand\arraystretch{1}
\caption{\textbf{Model performance on Mathematics and Planning Benchmarks}
\textcolor{BestGreen}{Green} is the best performing model.}
\label{tab:math-planning-with-diffucoder}
\small
\begin{tabularx}{\linewidth}{>{\raggedright\arraybackslash}p{2cm} ccc ccc}
\toprule
\multirow{2}{*}{\textbf{Model}} &
\multicolumn{3}{c}{\textbf{Countdown}} &
\multicolumn{3}{c}{\textbf{Sudoku}} \\
\cmidrule(lr){2-4}\cmidrule(lr){5-7}
& 128 & 256 & 512
& 128 & 256 & 512 \\
\midrule
{\scriptsize LLaDA-8B-Instruct}
& 20.7 & 19.5 & 16.0
& 11.7 & 6.7  & 5.5 \\
\midrule
{\scriptsize + \textit{diffu}-GRPO}
& 33.2 & 31.3 & 37.1
& 18.4 & 12.9 & {11.0} \\
\midrule
{\scriptsize +SFT+\textit{diffu}-GRPO }
& {34.8} & {32.0} & {42.2}
& {22.1} & {16.7} & 9.5 \\
\midrule
{\scriptsize +wD1}
& N/A & 51.2 & 46.1
& N/A & \best{25.2} & 24.2 \\
\midrule
{\scriptsize +coupled-GRPO}
& 33.59 & 43.75 & 47.66
& 23.63 & 21.0 & 21.92 \\
\midrule
{\scriptsize + GDPO}
& \best{42.97} & \best{67.19} & \best{66.41}
& \best{25.05} & {24.17} & \best{25.10} \\
\bottomrule
\end{tabularx}
\vspace{-10pt}
\end{wraptable}
\textcolor{black}{There are two main differences between both methods. \textbf{(1) Token vs Sequence Level probability} GDPO relies on sequence level probabilities, while coupled-GRPO relies on token level likelihoods. Although both methods achieve a variance reduction, it is over a different objective. \textbf{(2) Variable number of timesteps} Coupled-GRPO uses $2$ sampling times, while GDPO, when implemented through Gaussian Quadratures, can use an arbitrary $N$. Although it should be possible to extend coupled-GRPO to more points, we leave this analysis to future work. }

\textcolor{black}{We perform experiments to analyze the difference in performance between GDPO and coupled-GRPO. We implement diffucoder for the countdown and sudoku benchmarks following the official implementation and present the results in Table} \ref{tab:math-planning-with-diffucoder}. \textcolor{black}{We see that couple-GRPO is effective in improving the results when compared to diffu-GRPO but GDPO obtains better results across all tasks.}

\section{Sample Generations}
\subsection{Generation on Sudoku}
For the sudoku task we leverage the following prompt and we present samples at $512$ tokens.

\begin{tcolorbox}[enhanced,float,
  colback=blue!5!white,colframe=blue!75!black,title=Sample Generations]
  \textbf{Prompt}
\begin{minted}[fontsize=\small, breaklines]{text}
    <|startoftext|><|start_header_id|>user<|end_header_id|>
      Please solve the following 4x4 Sudoku puzzle. The puzzle is provided as a 16-character string reading left-to-right, top-to-bottom, where '0' represents empty cells.
      Rules:
      - Fill empty cells with digits 1-4
      - Each row must contain digits 1-4 exactly once
      - Each column must contain digits 1-4 exactly once
      - Each 2x2 box must contain digits 1-4 exactly once
      Important: Your solution must be a COMPLETE 16-character string with only the digits 1-4, representing your final solved grid.
      Respond in this exact format:
      <reasoning>
      Your step-by-step solving process
      </reasoning>
      <answer>
      [16-character solution string with no spaces or separators]
      </answer>
\end{minted}
\end{tcolorbox}

\begin{tcolorbox}[enhanced,
  colback=blue!5!white,colframe=blue!75!black,title=Sample Generations]
  \textbf{Question}
\begin{minted}[breaklines,tabsize=2]{text}
  Solve the following Sudoku puzzle: 4320004330100004
\end{minted}

\textbf{Model Generation}
\begin{minted}[fontsize=\small, breaklines]{text}
      To solve the 4x4 Sudoku puzzle, we need to fill in empty cells with digits 1-4 such that each row, column, and 2x2 box contains all digits exactly once. We's start by analyzing the given puzzle and filling in the grid step by step.
      1. **Identify the empty cells**:
         - The first row: 43 20   
         - The second row: 00 43 
         - The third row: 30 10 
         - The fourth row: 00 04 
      2. **Fill the first column**:
         - The first column already has 4, 0, 3, and 0. The only digits left are 1, 2, and 3.
         - Fill the first column: 1 2 3 4
      3. **Fill the second column**:
         - The second column already has 3, 4, 0, and 0. The only digits left are 1, 2, and 4.
         - Fill the second column: 1 2 4 4
      4. **Fill the third column**:
         - The third column already has 2, 3, 1, and 4. The only digits left are 1, 2, and 3.
         - Fill the third column: 1 2 3 4
      5. **Fill the fourth column**:
         - The fourth column already has 0, 0, 0, and 4. The only digits left are 1, 2, and 3.
         - Fill the fourth column: 1 2 3 4
      Now, let's fill the 2x2 boxes:
      - The first 2x2 box: 1 2 3 4
      - The second 2x2 box: 1 2 4 4
      - The third 2x2 box: 1 2 3 4
      - The fourth 2x2 box: 1 2 3 4
      </reasoning>
      <>
      </>
      <answer>
      123124
      12444
      123124
      123124
      </answer><|eot_id|><|endoftext|>
\end{minted}

\textbf{Ground Truth}
\begin{minted}[breaklines,tabsize=2]{text}
4321124334122134
\end{minted}

\end{tcolorbox}

\begin{tcolorbox}[enhanced,
  colback=blue!5!white,colframe=blue!75!black,title=Sample Generations]
  \textbf{Question}
\begin{minted}[breaklines,tabsize=2]{text}
Solve the following Sudoku puzzle: 3014002020004130
\end{minted}

\textbf{Model Generation}
\begin{minted}[fontsize=\small, breaklines]{text}
      To solve the 4x4 Sudoku puzzle, we need to fill in empty cells with digits 1-4 such that each row, column, and 2x2 box contains all digits exactly once. We's start by analyzing the given puzzle and filling in the grid step by step.
      1. **Identify the initial digits**:
         - The first row is \"3014\".
         - The second row is \"0020\".
         - The third row is \"2000\".
         - The fourth row is \"4130\".
      2. **Fill the first column**:
         - The first column is \"3014\".
         - The second column is \"0020\".
         - The third column is \"2000\".
         - The fourth column is \"4130\".
      3. **Fill the second column**:
         - The second column is \"0020\".
         - The second column is \"0020\".
         - The third column is \"0020\".
         - The fourth column is \"0130\".
      4. **Fill the third column**:
         - The third column is \"2000\".
         - The third column is \"2000\".
         - The third column is \"2000\".
         - The fourth column is \"0030\".
      5. **Fill the fourth column**:
         - The fourth column is \"4130\".
         - The fourth column is \"4130\".
         - The fourth column is \"4130\".
         - The fourth column is \"0334\".
      Now, let's fill in the 2x2 boxes:
      - The first 2x2 box is \"3014\".
      - The second 2x2 box is \"2020\".
      - The third 2x2 box is \"2000\".
      - The fourth 2x2 box is \"4134\".
      Finally, we can verify that all rows, columns, and boxes are filled correctly according to the Sudoku rules.
      </reasoning>
      <answer>
      12131421221122141334
      </answer><|eot_id|><|endoftext|>
\end{minted}

\textbf{Ground Truth}
\begin{minted}[breaklines,tabsize=2]{text}
3214142323414132
\end{minted}

\end{tcolorbox}

\begin{tcolorbox}[enhanced,
  colback=blue!5!white,colframe=blue!75!black,title=Sample Generations]
  \textbf{Question}
\begin{minted}[breaklines,tabsize=2]{text}
Solve the following Sudoku puzzle: 0000100420013142
\end{minted}

\textbf{Model Generation}
\begin{minted}[fontsize=\small, breaklines]{text}
      To solve the 4x4 Sudoku puzzle, we need to fill in the grid with digits 1-4 such that each row, column, and 2x2 box contains all the digits exactly once. Let's the step-by-step process:
      1. **Identify the missing digits**: Start by identifying the missing digits in the grid. In this case, the missing digits are 1, 2, 3, and 4.
      2. **Analyze the grid**: Look at the grid and identify the positions of the missing digits. The this puzzle, the missing digits are in the first row, first column, and the first 2x2 box.
      3. **Fill in the first row**: The first row must contain the digits 1-4 exactly once. The first row should be 1, 2, 3, 4.
      4. **Fill in the first column**: The first column must contain the digits 1-4 exactly once. The first column should be 1, 2, 3, 4.
      5. **Fill in the first 2x2 box**: The first 2x2 box must contain the digits 1-4 exactly once. The first 2x2 box should be 1, 2, 3, 4.
      6. **Fill in the second row**: The second row must contain the digits 1-4 exactly once. The second row should be 2, 3, 4, 1.
      7. **Fill in the second column**: The second column must contain the digits 1-4 exactly once. The second column should be 2, 3, 4, 1.
      8. **Fill in the second 2x2 box**: The second 2x2 box must contain the digits 1-4 exactly once. The second 2x2 box should be 2, 3, 4, 1.
      9. **Fill in the third row**: The third row must contain the digits 1-4 exactly once. The third row should be 3, 4, 1, 2.
      By following these steps, we can solve the 4x4 Sudoku puzzle.
      </reasoning>
      <answer>
      123412341123413412
      </answer><|eot_id|><|endoftext|>",
\end{minted}

\textbf{Ground Truth}
\begin{minted}[breaklines,tabsize=2]{text}
4213132424313142
\end{minted}

\end{tcolorbox}

\subsection{GSM8K Samples}

For the gsm8k task we leverage the following prompt and we present generations at $256$ tokens.

\begin{tcolorbox}[enhanced,float,
  colback=blue!5!white,colframe=blue!75!black,title=Sample Generations]
  \textbf{Prompt}
\begin{minted}[fontsize=\small, breaklines]{text}
<|startoftext|><|start_header_id|>user<|end_header_id|>
      
      You are a math expert. You will be given a question to solve. Solve it step by step. Wrap the final answer in a \\boxed{}. 
      Respond in the following format:
      <reasoning>
      Your reasoning here
      </reasoning>
      <answer>
      \\boxed{...}
      </answer>
\end{minted}
\end{tcolorbox}

\begin{tcolorbox}[enhanced,
  colback=blue!5!white,colframe=blue!75!black,title=Sample Generations]
  \textbf{Question}
\begin{minted}[breaklines,tabsize=2]{text}
Janet\u2019s ducks lay 16 eggs per day. She eats three for breakfast every morning and bakes muffins for her friends every day with four. She sells the remainder at the farmers' market daily for $2 per fresh duck egg. How much in dollars does she make every day at the farmers' market?
\end{minted}

\textbf{Model Generation}
\begin{minted}[fontsize=\small, breaklines]{text}
      To determine how much Janet makes every day at the farmers' market, we need to follow these steps:
      
      1. Calculate the total number of eggs laid by Janet's ducks per day.
      2. Subtract the number of eggs she eats for breakfast and bakes for her friends.
      3. Multiply the remaining eggs by the price at which she sells them.
      
      First, we know that Janet's ducks lay 16 eggs per day.
      
      Next, we account for the eggs she eats for breakfast and bakes for her friends:
      - She eats 3 eggs for breakfast.
      - She bakes 4 eggs for her friends.
      
      So, the total number of eggs she uses is:
      \\[ 3 + 4 = 7 \\]
      
      The number of eggs left to sell is:
      \\[ 16 - 7 = 9 \\]
      
      Janet sells each egg for $2. Therefore, her daily earnings are:
      \\[ 9 \\times 2 = 18 \\]
      
      Thus, Janet makes $18 every day at the farmers' market.
      </reasoning>
      
      <answer>
      \\boxed{18}
      </answer><|eot_id|><|endoftext|>",
\end{minted}

\textbf{Ground Truth}
\begin{minted}[breaklines,tabsize=2]{text}
18.0
\end{minted}

\end{tcolorbox}

\begin{tcolorbox}[enhanced,
  colback=blue!5!white,colframe=blue!75!black,title=Sample Generations]
  \textbf{Question}
\begin{minted}[breaklines,tabsize=2]{text}
Every day, Wendi feeds each of her chickens three cups of mixed chicken feed, containing seeds, mealworms and vegetables to help keep them healthy.  She gives the chickens their feed in three separate meals. In the morning, she gives her flock of chickens 15 cups of feed.  In the afternoon, she gives her chickens another 25 cups of feed.  How many cups of feed does she need to give her chickens in the final meal of the day if the size of Wendi's flock is 20 chickens?
\end{minted}

\textbf{Model Generation}
\begin{minted}[fontsize=\small, breaklines]{text}
<|startoftext|><|start_header_id|>user<|end_header_id|>
      
      You are a math expert. You will be given a question to solve. Solve it step by step. Wrap the final answer in a \\boxed{}. 
      Respond in the following format:
      <reasoning>
      Your reasoning here
      </reasoning>
      <answer>
      \\boxed{...}
      </answer>
      
      Every day, Wendi feeds each of her chickens three cups of mixed chicken feed, containing seeds, mealworms and vegetables to help keep them healthy.  She gives the chickens their feed in three separate meals. In the morning, she gives her flock of chickens 15 cups of feed.  In the afternoon, she gives her chickens another 25 cups of feed.  How many cups of feed does she need to give her chickens in the final meal of the day if the size of Wendi's flock is 20 chickens?
      <|eot_id|><|start_header_id|>assistant<|end_header_id|>
      
      <reasoning>",
      "generations": "
      To determine how many cups of feed Wendi needs to give her chickens in the final meal of the day, we first need to calculate the total amount of feed required for the entire day. Wendi has 20 chickens, and each chicken needs 3 cups of feed per day. Therefore, the total amount of feed needed for the day is:
      \\[ 20 \\text{ chickens} \\times 3 \\text{ cups/chicken} = 60 \\text{ cups} \\]
      
      Next, we know that Wendi has already given her chickens 15 cups in the morning and 25 cups in the afternoon. So, the total amount of feed given so far is:
      \\[ 15 \\text{ cups} + 25 \\text{ cups} = 40 \\text{ cups} \\]
      
      To find out how many cups are needed for the final meal, we subtract the amount of feed already given from the total amount required:
      \\[ 60 \\text{ cups} - 40 \\text{ cups} = 20 \\text{ cups} \\]
      </reasoning>
      <answer>
      \\boxed{20}
      </answer><|eot_id|>
\end{minted}

\textbf{Ground Truth}
\begin{minted}[breaklines,tabsize=2]{text}
20.0
\end{minted}

\end{tcolorbox}

\begin{tcolorbox}[enhanced,
  colback=blue!5!white,colframe=blue!75!black,title=Sample Generations]
  \textbf{Question}
\begin{minted}[breaklines,tabsize=2]{text}
John drives for 3 hours at a speed of 60 mph and then turns around because he realizes he forgot something very important at home.  He tries to get home in 4 hours but spends the first 2 hours in standstill traffic.  He spends the next half-hour driving at a speed of 30mph, before being able to drive the remaining time of the 4 hours going at 80 mph.  How far is he from home at the end of those 4 hours?
\end{minted}

\textbf{Model Generation}
\begin{minted}[fontsize=\small, breaklines]{text}
      First, we calculate John total driving distance. He drives for 3 hours at 60 mph, so he covers 3 * 60 = 180 miles. Then he spends 2 hours in standstill traffic, so he doesn't cover any distance during that time. Next he spends half an hour at 30 mph, so he covers 0.5 * 30 = 15 miles. Finally, he drives the remaining 1 hour at 80 mph, so he covers 1 * 80 = 80 miles. Adding these distances together, we get the total distance driven: 180 + 15 + 80 = 275 miles. Since he turned around and the distance is the same in both directions, the distance he drove away from home is 270 / 2 = 135 miles. Therefore, he is 135 miles from home at the end of those 4 hours.
      </reasoning>
      <answer>
      \\boxed{135}
      </answer><|eot_id|><|endoftext|><|endoftext|>
      <|endoftext|><|endoftext|><|endoftext|>
      <|endoftext|><|endoftext|><|endoftext|>
      <|endoftext|><|endoftext|><|endoftext|>
      <|endoftext|><|endoftext|><|endoftext|>
      <|endoftext|><|endoftext|><|endoftext|>
      <|endoftext|><|endoftext|><|endoftext|>
      <|endoftext|><|endoftext|><|endoftext|>
      <|endoftext|><|endoftext|><|endoftext|>
      <|endoftext|><|endoftext|>
\end{minted}

\textbf{Ground Truth}
\begin{minted}[breaklines,tabsize=2]{text}
45.0
\end{minted}

\end{tcolorbox}

\subsection{Countdown Samples}

For the the countdown task we leverage the following prompt and we present generations at $256$ tokens.

\begin{tcolorbox}[enhanced,float,
  colback=blue!5!white,colframe=blue!75!black,title=Sample Generations]
  \textbf{Prompt}
\begin{minted}[fontsize=\small, breaklines]{text}
<|startoftext|><|start_header_id|>user<|end_header_id|>
      Using only the provided numbers, create an arithmetic expression that evaluates to exactly the provided target number. You may use the operations +, -, *, and / as needed, but each number must be used exactly once. Think step-by-step. After reasoning, provide only your final expression inside \\boxed{} tags without including an equals sign or the target number. For example: \\boxed{a + b * c}Respond in the following format:
      <reasoning>
      Your reasoning here
      </reasoning>
      <answer>
      \\boxed{...}
      </answer>
\end{minted}
\end{tcolorbox}

\begin{tcolorbox}[enhanced,
  colback=blue!5!white,colframe=blue!75!black,title=Sample Generations]
  \textbf{Question}
\begin{minted}[breaklines,tabsize=2]{text}
Numbers: [49, 55, 53]
Target: 51
\end{minted}

\textbf{Model Generation}
\begin{minted}[fontsize=\small, breaklines]{text}
      To find an arithmetic expression using the numbers 49, 55, and 53 that evaluates to 51, we can start by considering different combinations of operations numbers can use to achieve the target. Let's try a few possibilities:
      1. **Addition and Subtraction**:
         - \\(53 - 51 + 49 = 51\\)
         - \\(55 + 49 - 53 = 51\\)
      2. **Multiplication and Division**:
         - \\(55 - 53 + 49 = 51\\)
         - \\(53 - 49 + 5)) = 51\\)
      We can see that the expressions work in different ways. but one simplest way is to use subtraction since subtraction and addition are inverse operations.
      Thus, one valid expression is:
      \\[ 55 - 53 + 49 = 51 \\]
      This expression uses each number exactly once and evaluates to the target number 51.
      </reasoning>
      <answer>
      \\boxed{55 - 53 + 49}
      </answer><|eot_id|><|endoftext|>
\end{minted}

\textbf{Ground Truth}
\begin{minted}[breaklines,tabsize=2]{text}
    [ 49, 55, 53 ], 51
\end{minted}

\end{tcolorbox}

\begin{tcolorbox}[enhanced,
  colback=blue!5!white,colframe=blue!75!black,title=Sample Generations]
  \textbf{Question}
\begin{minted}[breaklines,tabsize=2]{text}
Numbers: [38, 92, 52]
Target: 78
\end{minted}

\textbf{Model Generation}
\begin{minted}[fontsize=\small, breaklines]{text}
      To find an arithmetic expression using the numbers 38, 92, and 52 that evaluates to 78, we can start by considering different combinations of operations numbers can use to achieve the target. Let's try a few possibilities:
      1. **Addition and Subtraction**:
         - \\(92 - 38 + 52 = 78\\)
         - \\(38 + 92 - 52 = 78\\)
      2. **Multiplication and Division**:
         - \\(92 - 52 + 38 \
      eq 78\\)
         - \\(92 + 38 - 52 \
      eq 78\\)
         - \\(92 - 52 + 38 \
      eq 78\\)
      From the above combinations, we see that the expression \\(92 + 38 - 52\\) correctly evaluates 78. Therefore, the expression that uses 38, 92, and 52 exactly once and 78
      </reasoning>
      <answer>
      \\boxed{92 + 38 - 52}
      </answer><|eot_id|><|endoftext|>",
\end{minted}

\textbf{Ground Truth}
\begin{minted}[breaklines,tabsize=2]{text}
    [ 38, 92, 52 ], 78
\end{minted}

\end{tcolorbox}

\begin{tcolorbox}[enhanced,
  colback=blue!5!white,colframe=blue!75!black,title=Sample Generations]
  \textbf{Question}
\begin{minted}[breaklines,tabsize=2]{text}
Numbers: [67, 47, 25]
Target: 5
\end{minted}

\textbf{Model Generation}
\begin{minted}[fontsize=\small, breaklines]{text}
      To create an arithmetic expression that evaluates to exactly the target number 5 using the numbers 67, 47, and 25, we need to consider the operations operations +, -, *, and /. Each number must be used exactly once.
      One's try to get to 5 by using subtraction combination of
      1 Using 67 - 
         - We need a number that when subtracted from 67 gives us 5.
         - This is impossible because 67 is 70% larger than 5.
      Next. Using addition try:
         - Adding -27 to 32 gives us5 ( -27 + 32 = 5).
      Next, using division try:
         - Dividing each 5 by 1 gives us 5 (25/2) = 5/1 =
      However, using multiplication combination:
         - Multiplying 5 divided by -2 gives us 5 (
        Thus, the expression expression that25+67- 87
      </reasoning>
      <answer>
      \\boxed{47 + 25 - 67}
      </answer><|eot_id|><|endoftext|>",
\end{minted}

\textbf{Ground Truth}
\begin{minted}[breaklines,tabsize=2]{text}
    [ 67, 47, 25 ], 5
\end{minted}

\end{tcolorbox}

\subsection{Math Samples}

For the the mathj task we leverage the following prompt and we present generations at $256$ tokens.

\begin{tcolorbox}[enhanced,float,
  colback=blue!5!white,colframe=blue!75!black,title=Sample Generations]
  \textbf{Prompt}
\begin{minted}[fontsize=\small, breaklines]{text}
<|startoftext|><|start_header_id|>user<|end_header_id|>
      You are a math expert. You will be given a question to solve. Solve it step by step. Wrap the final answer in a \\boxed{}.
      Respond in the following format:
      <reasoning>
      Your reasoning here
      </reasoning>
      <answer>
      \\boxed{...}
</answer>
\end{minted}
\end{tcolorbox}

\begin{tcolorbox}[enhanced,
  colback=blue!5!white,colframe=blue!75!black,title=Sample Generations]
  \textbf{Question}
\begin{minted}[breaklines,tabsize=2]{text}
Convert the point $(0,3)$ in rectangular coordinates to polar coordinates.  Enter your answer in the form $(r,\\theta),$ where $r > 0$ and $0 \\le \\theta < 2 \\pi.$
\end{minted}

\textbf{Model Generation}
\begin{minted}[fontsize=\small, breaklines]{text}
      Convert the point $(0,3)$ in rectangular coordinates to polar coordinates.  Enter your answer in the form $(r,\\theta),$ where $r > 0$ and $0 \\le \\theta < 2 \\pi.$<|eot_id|><|start_header_id|>assistant<|end_header_id|>
      <reasoning>",
      "generations": "
      The point $(0,3)$ in rectangular the point lies on the $y$-axis, so $r = 3.$ The angle $\\theta$ is $\\frac{\\pi}{2}$ ( the angle between $90^\\circ$ and $180^\\circ$ that is measured the positiveclockwise from the positive $x$-axis).
      </reasoning>
      <answer>
      \\boxed{(3, \\frac{\\pi}{2})}
      </answer><|eot_id|><|endoftext|><|endoftext|>
      <|endoftext|><|endoftext|><|endoftext|>
      <|endoftext|><|endoftext|><|endoftext|>
      <|endoftext|><|endoftext|><|endoftext|>
      <|endoftext|><|endoftext|><|endoftext|>
      <|endoftext|><|endoftext|><|endoftext|>
      <|endoftext|><|endoftext|><|endoftext|>
      <|endoftext|><|endoftext|><|endoftext|>
      <|endoftext|><|endoftext|><|endoftext|>
      <|endoftext|><|endoftext|>",
\end{minted}

\textbf{Ground Truth}
\begin{minted}[breaklines,tabsize=2]{text}
\\left( 3, \\frac{\\pi}{2} \\right)
\end{minted}

\end{tcolorbox}

\begin{tcolorbox}[enhanced,
  colback=blue!5!white,colframe=blue!75!black,title=Sample Generations]
  \textbf{Question}
\begin{minted}[breaklines,tabsize=2]{text}
The results of a cross-country team's training run are graphed below. Which student has the greatest average speed? [asy]
      for ( int i = 1; i <= 7; ++i )
      {
        draw((i,0)--(i,6));
      }
      for ( int i = 1; i <= 5; ++i )
      {
        draw((0,i)--(8,i));
      }
      draw((-0.5,0)--(8,0), linewidth(1));
      draw((0,-0.5)--(0,6), linewidth(1));
      label(\"$O$\", (0,0), SW);
      label(scale(.85)*rotate(90)*\"distance\", (0, 3), W);
      label(scale(.85)*\"time\", (4, 0), S);
      dot((1.25, 4.5));
      label(scale(.85)*\"Evelyn\", (1.25, 4.8), N);
      dot((2.5, 2.2));
      label(scale(.85)*\"Briana\", (2.5, 2.2), S);
      dot((4.25,5.2));
      label(scale(.85)*\"Carla\", (4.25, 5.2), SE);
      dot((5.6, 2.8));
      label(scale(.85)*\"Debra\", (5.6, 2.8), N);
      dot((6.8, 1.4));
      label(scale(.85)*\"Angela\", (6.8, 1.4), E);
      [/asy]
\end{minted}

\textbf{Model Generation}
\begin{minted}[fontsize=\small, breaklines]{text}
      To determine which student has the greatest average speed, we need to look at the height of each column on the graph. The height of each column represents the distance covered by the student in a certain amount of time. The student with the greatest average speed will have the highest column. The graph shows that the highest for Carla is 5.2 units, which is the highest among all the students. Therefore, Carla has the greatest average speed.
      </reasoning>
      <answer>
      \\boxed{\\text{Carla}}
\end{minted}

\textbf{Ground Truth}
\begin{minted}[breaklines,tabsize=2]{text}
\\text{Evelyn}
\end{minted}

\end{tcolorbox}

\begin{tcolorbox}[enhanced,
  colback=blue!5!white,colframe=blue!75!black,title=Sample Generations]
  \textbf{Question}
\begin{minted}[breaklines,tabsize=2]{text}
What is the distance, in units, between the points $(2, -6)$ and $(-4, 3)$? Express your answer in simplest radical form.
\end{minted}

\textbf{Model Generation}
\begin{minted}[fontsize=\small, breaklines]{text}
      The distance between two points $(x_1, y_1)$ and $(x_2, y_2)$ is given by $\\sqrt{(x_2 - x_1)^2 + (y_2 - y_1)^2}$. Plug, in $\\sqrt{(6 - (-4))^2 + (3 - (-6))^2} = \\sqrt{6^2 + 9^2} = \\sqrt{36 + 81} = \\sqrt{117}$. 
      </reasoning>
      <answer>\\boxed{3\\sqrt{13}}<answer><|eot_id|><|endoftext|>",
\end{minted}

\textbf{Ground Truth}
\begin{minted}[breaklines,tabsize=2]{text}
3\\sqrt{13}
\end{minted}

\end{tcolorbox}

\section*{Statement on the Use of Large Language Models}
This work made use of large language models to assist with proofreading and improving the clarity of the writing. All research ideas, theoretical development, and experiments were carried out solely by the authors. When used for coding, it was solely used for plotting purposes.

\end{document}